\theoremstyle{plain}
\newtheorem{theorem}{Theorem}
\newtheorem{proposition}{Proposition}
\newtheorem{lemma}{Lemma}
\newcommand{\Hquad}{\hspace{0.5em}}
\newcommand\numberthis{\addtocounter{equation}{1}\tag{\theequation}}
\newcommand{\BibTeX}{B\kern-.05em{\sc i\kern-.025em b}\kern-.08em\TeX}
\begin{document}

\captionsetup[figure]{
  aboveskip=10pt,  
  belowskip=10pt   
}
\captionsetup[table]{
  aboveskip=10pt,  
  belowskip=10pt   
}

\begin{frontmatter}


\paperid{1122} 


\title{Preserving the Privacy of Reward Functions in MDPs through Deception}


\author[A]{\fnms{Shashank}~\snm{Reddy Chirra}\thanks{Corresponding Author. Email: shashankc@smu.edu.sg.
}
\footnote{Work partially conducted when enrolled at the International Institute of Information Technology, Bangalore.\\
\textbf{Code:} https://github.com/shshnkreddy/DeceptiveRL}}

\author[A]{\fnms{Pradeep}~\snm{Varakantham}}
\author[B]{\fnms{Praveen}~\snm{Paruchuri}} 

\address[A]{Singapore Management University}
\address[B]{International Institute of Information Technology, Hyderabad}


\begin{abstract}
Preserving the privacy of preferences (or rewards) of a sequential decision-making agent when decisions are observable is crucial in many physical and cybersecurity domains. For instance, in wildlife monitoring, agents must allocate patrolling resources without revealing animal locations to poachers. This paper addresses privacy preservation in planning over a sequence of actions in MDPs, where the reward function represents the preference structure to be protected. Observers can use Inverse RL (IRL) to learn these preferences, making this a challenging task.

Current research on differential privacy in reward functions fails to ensure guarantee on the minimum expected reward and offers theoretical guarantees that are inadequate against IRL-based observers. To bridge this gap, we propose a novel approach rooted in the theory of deception. Deception includes two models: dissimulation (hiding the truth) and simulation (showing the wrong). Our first contribution theoretically demonstrates significant privacy leaks in existing dissimulation-based methods. Our second contribution is a novel RL-based planning algorithm that uses simulation to effectively address these privacy concerns while ensuring a guarantee on the expected reward. Experiments on multiple benchmark problems show that our approach outperforms previous methods in preserving reward function privacy. 
\end{abstract}

\end{frontmatter}

\setcounter{footnote}{2}

\section{Introduction}
\label{sec:intro}
In the realm of decision-making, particularly in situations involving resource allocation in the context of security, agents face the complex task of making choices that are potentially observable by external entities. These choices can carry substantial implications, revealing critical insights into the preferences (or significance) over different targets (or states in general); which can be strategically harnessed by observers in potentially harmful ways. The central challenge lies in optimizing these decisions while safeguarding the privacy of the agents' underlying preferences. Our specific focus is on addressing this challenge within the context of Reinforcement Learning (RL) based planners where the reward function represents the preferences that must be kept private. In such a case, it is crucial to recognize that a significant portion of the reward is embedded in the agents's decision-making policy. Therefore the agent must take actions that preserve the privacy of the reward while still achieving good performance. Take, for instance, green security games (GSGs) \cite{GSG_1}, where forest rangers patrol to monitor various animal populations. Poachers observing these patrols could exploit the information to locate and target animals. Thus, rangers must conduct effective surveillance while simultaneously deceiving the poachers. Similarly, in urban policing, cities are divided into regions, with each region assigned a reward based on factors such as crime rates, wealth, etc \cite{chen2013police, cornellpolice}. It is important for law enforcement to keep this reward function private for enhanced security.

\begin{figure}
\centering\includegraphics[width=2.8in,height=1.4in]{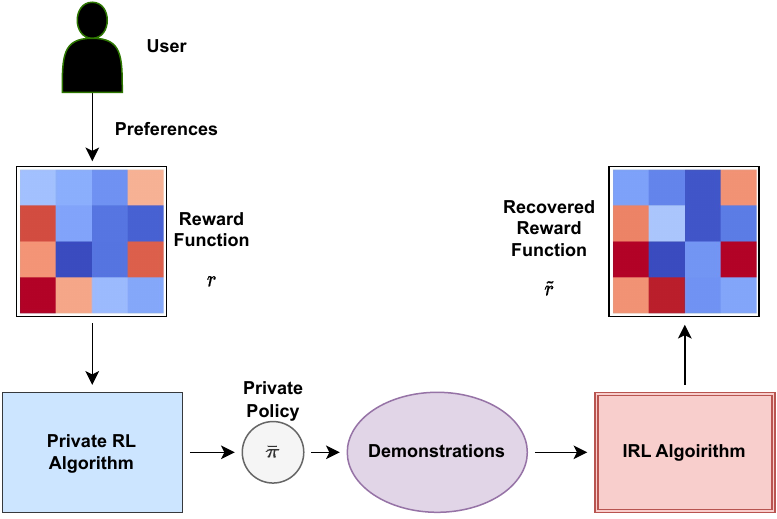}
    \caption{Flow of Information}
    \label{fig:if_flow}
\end{figure}

The potential of reverse-engineering the agent's reward forms the basis for the field of Inverse Reinforcement Learning (IRL)~\cite{ng_irl} which poses a substantial privacy risk. IRL has demonstrated the remarkable ability to reconstruct high-quality reward functions across various environments~\cite{gail, iql}. This concern underscores the importance of developing robust mechanisms to shield the agent's reward function, ensuring the integrity of decision-making and preventing potential privacy breaches. The problem of privacy preservation of the reward function is illustrated in Figure \ref{fig:if_flow}. First, the user defines the reward function $r$, which encodes their preferences. Next, a private RL algorithm learns a policy that maximizes the reward while simultaneously keeping the reward function private. An observer can then use an IRL algorithm to recover a reward function $\tilde{r}$ by observing demonstrations of the agent. If $\tilde{r}$ is of a high quality it will have properties very similar to $r$ making it feasible for an observer to estimate the preferences of the user. An ill-intentioned observer can use this information about the reward function to manipulate the agent to engage in undesired behaviours \cite{DBLP:conf/iclr/GleaveDWKLR20}. IRL covers the entirety of methods for recovering the reward function within our specific context, wherein the observer lacks additional information such as the nature of the reward function, domain knowledge, etc. However, a notable limitation of IRL lies in its assumption that demonstrations are not deceptive. To address this limitation, we introduce two modifications to IRL algorithms that account for deceptive demonstrations.

\begin{figure}
\centering\includegraphics[scale=0.4]{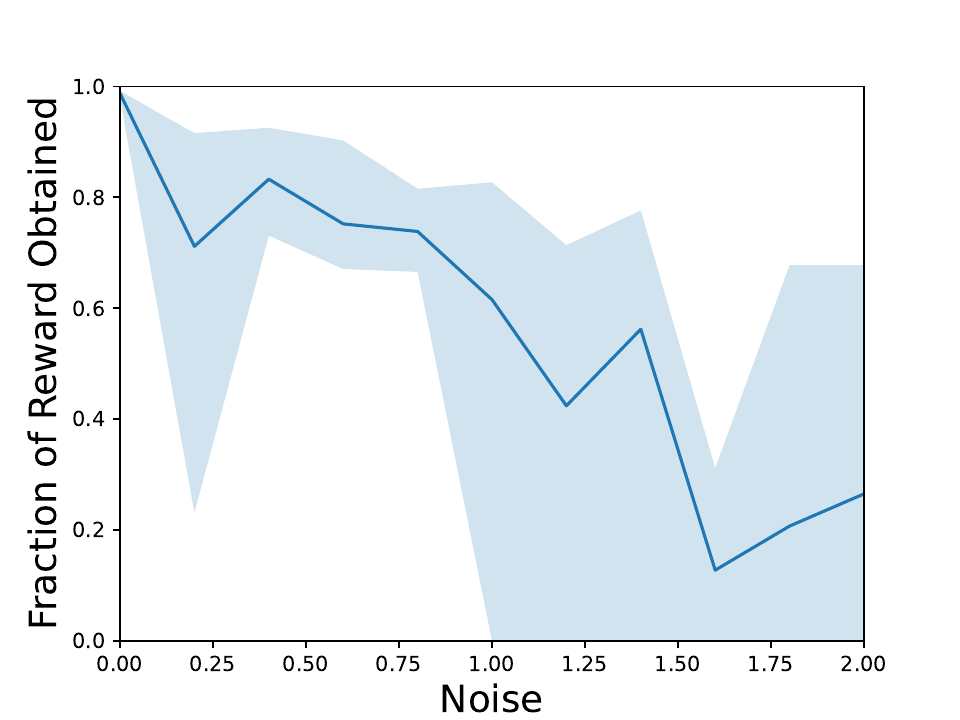}
    \caption{Expected Reward v/s Injected Noise of DQFN in the Four Rooms environment averaged over 5 seeds. The shaded region represents the max and min values. The large variance in the reward obtained underscores the difficulty in managing the privacy-reward tradeoff when using DQFN.}
    \label{fig:dp_rew}
\end{figure}

\noindent \textbf{\textit{Existing Work}}
tackles the problem in two ways:\\
\noindent (1) {\em Through the use of Differential Privacy (DP) methods}: DP-based methods \cite{Prakash_Husain_Paruchuri_Gujar_2022, vietri2020private, Zou} and the Deep Q-learning with Functional Noise (DQFN) algorithm \cite{dqfn} introduce noise to computations such as Q-functions, Value functions, and Policy Gradients. This addition of noise guarantees that reward functions within $l_{\infty}$-neighbourhood of each other return the same policy, making it difficult for an observer to reconstruct the exact reward function. In the context of reward reconstruction, these guarantees are ill-suited as (a) 
there are infinitely many reward functions that can explain the observed behaviour, (b) the $l_{\infty}$ and $l_{p}$ norms are not good metrics to use when comparing reward functions as two reward functions in the same $l_{\infty}$ neighbourhood may possess several other properties that pose a privacy leak such as ordering of polices (explained in Section \ref{sec:assesing}). This leads to a privacy leak in practice as highlighted in \cite{Prakash_Husain_Paruchuri_Gujar_2022}. This is in addition to the fact that these approaches lack built-in reward constraints make it difficult for a user to balance the tradeoff between expected reward and privacy without resorting to time-consuming hyper parameter searches. As shown in Figure~\ref{fig:dp_rew}, it is difficult to achieve a good privacy/reward trade-off due to high variance in expected reward as noise is increased. As highlighted in \cite{Prakash_Husain_Paruchuri_Gujar_2022}, another drawback of these methods is that the quality of the reward function recovered by an observer is independent of the noise added, undermining their effectiveness as a private algorithm.



 \noindent (2) {\em Through the use of deception}: Deception involves the act of intentionally creating or upholding false beliefs in the minds of others \cite{10.1093/acprof:oso/9780199577415.001.0001}. There are two primary approaches to deception: (a) "dissimulation" that relates to "hiding" the truth, and (b) "simulation" which entails providing false information to "mislead" the observer into believing something that is not true. The Max Entropy Intentional Randomization (MEIR) \cite{maxent} algorithm developed to preserve the privacy of the reward function is an existing "dissimulation" based deception algorithm (as shown later in this paper). Although the MEIR algorithm satisfies constraints on the expected rewards, we show that it leaks significant information about the reward function when faced with IRL-based observers. Other deception-based planning algorithms such as those discussed in \cite{10.5555/3061053.3061065, kulkarni2018resource, Masters2017DeceptiveP, topcu, deceptiverl, deceptiverl2}, are limited in their ability to tackle this problem. This limitation arises from their focus on optimizing deception for a singular trajectory, inadvertently disclosing information about the reward function across multiple trajectories. These methods also consider a different problem as discussed in Section \ref{app:related_work} in the Appendix. 

\noindent \textbf{\textit{Contributions:}}
Our contributions are as follows:
\begin{itemize}
\item \textbf{Theoretical Analysis of Privacy Leak for MEIR}: We demonstrate theoretically and intuitively the significant privacy vulnerabilities of the MEIR algorithm when faced with an IRL observer.
\item \textbf{Novel Max Misinformation Algorithm}: We introduce the innovative Max Misinformation (MM) Algorithm, designed to address the shortcomings of MEIR and DP-based methods. A key element is the introduction of an anti-reward function, enabling a balanced tradeoff between the expected value and the ability to deceive the observer.
\item \textbf{Effectiveness Against IRL algorithms}: We provide insights into why MM can robustly counteract observers utilizing various IRL algorithms, demonstrating its superiority in preserving privacy of the reward function. In addition, we experiment against two additional algorithms based on IRL that an observer might use if they know they are being deceived, and demonstrate the robustness of the proposed algorithm in this case as well.
\item \textbf{Comprehensive Evaluation}: To gauge the effectiveness of our algorithm, we rigorously evaluate it against IRL-based observers across diverse benchmark environments. We measure the quality of the recovered reward functions in comparison to the original reward using the Rollout method \cite{epic}, Pearson Correlation, and the Equivalent-Policy Invariant Comparison (EPIC) distance \cite{epic}. Our conclusive findings highlight that the MM algorithm outperforms existing deception-based and DP-based algorithms, firmly establishing its efficacy in maintaining the privacy of the reward.
\end{itemize}

\section{Background}
\label{sec:background}


We provide a brief overview of the relevant decision-making models (Markov Decision Process (MDP) and Deceptive RL), Max Casual Entropy Inverse Reinforcement Learning (MCE-IRL) and Max Entropy Reinforcement Learning (MERL) which is the backbone for MCE-IRL based algorithms as well as pre-existing Deceptive RL formulations. 

\paragraph{Markov Decision Process}

We consider environments that can be expressed as Markov Decision Processes (MDP). An MDP $M$ is defined by the tuple $(S, A, P, r, \gamma, \mu)$, where $S$ is the set of states, $A$ is the set of actions, $P(s'|s,a) \in [0,1]$ is the transition probability, $r(s,a) \in \mathbb{R}$ is the reward function, $\gamma \in [0,1]$ is the discount factor and $\mu$ is the initial state distribution. A policy $\pi(.|s)$ is a probability distribution over the set of valid actions for a given state. In this paper, we base our results on the assumption that both $S$ and $A$ are discrete, with every state reachable under $\mu$ and $P$. The cumulative $\gamma$-discounted value, or expected value, of the reward obtained by following $\pi$ in $M$ is denoted as $E_{\pi}[r(s,a)] = E[\sum_{t=0}^{\infty}\gamma^t r(s_t,a_t)]$. The occupancy measure $\rho_{\pi}: S\times A \rightarrow \mathbb{R}$ of a policy $\pi$ is defined as $\rho(s,a) = (1-\gamma)\pi(a|s)\sum_{t=0}^{\infty}\gamma^tP(s=s_t|\pi)$. The expected reward can be expressed in terms of occupancy measures as $E_{\pi}[r(s,a)] = \sum_s \sum_a \rho_{\pi}(s,a) r(s,a)$. For brevity, we sometimes use $\rho$ to denote the occupancy measure of a policy $\pi$. It is worth mentioning that there exists a one-one mapping between a policy and its corresponding occupancy measure \cite{10.1145/1390156.1390286}. For rest of this paper, we rely on this result to use $\pi$ and $\rho$ interchangeably. 




\paragraph{Deceptive Reinforcement Learning}

Let $R$ be the set of all reward functions, then a \textit{deceptive reinforcement learning} problem is defined by the tuple, $(S, A, P, r, \gamma, \mu, L^{\pi}_{R})$, where $S, A, P, r, \gamma, \mu$ are the same as defined for a regular MDP, and $L^{\pi}_{R}(s,a)$ stands for deception-inducted reward function \cite{topcu} which combines the objective of reward maximization with deception. 

A \textit{deceptive} policy maximises the objective, 
\begin{align}
    J_D = E_{\pi} [L^{\pi}_{R}(s, a)]
\end{align}

We do not make any assumptions about the knowledge of the observer similar to \cite{maxent}. In such a case, $L^{\pi}_{R}$ is a weighted mixture of the reward function and the deception level \cite{deceptiverl},
\begin{align}
\label{eq:drl}
    L^{\pi}_{R}(s,a) = \omega r(s,a) + d^{\pi}_{R}(s,a) 
\end{align}
where $d^{\pi}_{R}(s,a) \in \mathbb{R}$ is a measure of deception, and $\omega \in \mathbb{R}$ controls the trade-off between reward maximization and deception. For example, $d^{\pi}_{R}(s,a)$ could be the entropy of the policy, i.e, $-\log{\pi}(a|s)$, leading to deception by dissimulation.

\paragraph{Maximum Entropy RL (MERL)}
The objective of Maximum Entropy Reinforcement Learning (MERL) \cite{sac} is to optimize both the value function and the entropy or uncertainty of the agent's policy. Formally, 
\begin{equation}
\label{eq:merl}
    \text{RL}(r) = \underset{\pi}{\text{argmax}} \Hquad \Big\{ E_{\pi} \left[r(s,a) \right] + \mathcal{H}(\pi) \Big\}
\end{equation}
where $r$ is the reward function, $\mathcal{H}(\pi) \triangleq E_{\pi}[-\log(\pi(a|s))]$ is the $\gamma$-discounted cumulative casual entropy. MERL is an important algorithm in this paper as the MCE-IRL model (discussed below) is built on MERL. In addition, we showcase in subsequent sections that the MEIR algorithm (discussed below) is also an instance of MERL making it a "dissimulation" based Deceptive RL algorithm.

\paragraph{Maximum Entropy Intentional Randomization (MEIR)}
The MEIR algorithm is a private RL algorithm that is driven by the concept that maximizing the entropy of the policy $\mathcal{H}$ will keep the reward function private. MEIR solves the following optimization problem, 
\begin{align*}
    \text{MEIR}(r, E_{min}) = \Hquad &\underset{\pi}{\text{argmax}} \Hquad \mathcal{H}(\pi) \\ &\text{subject to} \Hquad E_{\pi} [r(s,a)] \geq E_{min} \numberthis \label{eq:meir}
\end{align*}
where $E_{min} \in [\hat{E}, E^*]$ is the reward threshold. $\hat{E} = E_{\hat{\pi}}[r(s,a)]$ and $E^* = E_{\pi^*}[r(s,a)]$ where $\hat{\pi}$ and $\pi^*$ correspond to the uniform random and optimal deterministic policies, respectively. The reward threshold $E_{min}$ is used to control the privacy/reward tradeoff. 

\paragraph{Maximum Causal Entropy IRL (MCE-IRL)}
The Maximum Causal Entropy Inverse Reinforcement Learning \cite{mceirl} model has emerged as the most prominent method to infer an unknown reward function from demonstrations. Given the occupancy measure $\tilde{\rho}$ of an agent (calculated from demonstrations) MCE IRL recovers a reward function based on the following formulation, 
\begin{equation*}
    \text{MCE-IRL}(\tilde{\rho}) = \Hquad \underset{r}{\text{argmax}} \Hquad \underset{\rho}{\text{min}} \Hquad E_{\tilde{\rho}}[r] - E_{\rho}[r] - \mathcal{H(\rho)} \label{eq:mceirl}
\end{equation*}

MCE-IRL is closely linked with MERL as the occupancy measures of the agent is obtained from the recovered reward function $\tilde{r}$ as,
\begin{equation}
   \label{eq:irl_rl_eq}
    \tilde{\rho} = RL(\tilde{r}) 
\end{equation}

\section{Assessing Learnt Reward, $\tilde{r}$, from IRL}
\label{sec:assesing}
Before we describe our contributions, we describe mechanisms to evaluate whether the original preferences (reward) are captured by the reward learnt by an observer (using IRL). It is challenging to assess the quality of the recovered reward function due to the inherent ambiguity in the Inverse RL problem. The ambiguity is on account of multiple reward functions being able to explain the demonstrated behaviour. To evaluate the quality of the recovered reward function, we adopt the framework proposed in \cite{skalse2023misspecification}, which introduces three quality standards. 

The first standard implies the preservation of the "ordering" of policies with respect to the true reward function, ${r}$ in the recovered reward function, $\tilde{r}$ as an indicator of the highest quality. A policy $\pi_1$ is better ($\succeq$) than a policy $\pi_2$ if the expected value with policy $\pi_1$ is higher than the expected value of $\pi_2$.  
\begin{align*}
\pi_1 & \succeq \pi_2 \iff \nonumber\\
 &E_{\pi_1}[{r}(s,a)] \geq E_{\pi_2}[{r}(s,a)] \land E_{\pi_1}[\tilde{r}(s,a)] \geq E_{\pi_2}[\tilde{r}(s,a)] 
\end{align*}

The second standard implies that the recovered reward function, $\tilde{r}$ shares the same set of optimal policies as the true reward function.  
$$\underset{\pi}{\text{argmax}} E_{\pi}[r(.,.)] = \underset{\pi}{\text{argmax}} E_{\pi}[\tilde{r}(.,.)] $$  

Lastly, the third criterion implies that the recovered reward function fails to preserve any of these desirable properties, indicating a lower level of quality in learning.

By applying these three quality standards, we can assess and compare the effectiveness of the recovered reward function. Matching optimal policies between the true and recovered reward functions signifies valuable insights into the agent's optimal trajectories. If policy ordering is preserved, the observer not only gains trajectory insights but also learns their order, increasing the chances of unveiling the agent's encoded preferences in the reward function.

Formally, Let $R$ be the set of all possible reward functions $r: S\times A \rightarrow \mathbb{R}$ with state space $S$ and action space $A$, and $M <S, A, P, _, \gamma, \mu>$ be an MDP without a reward function. Let partitions $OPT^M$ and $ORD^M$ be defined on $R$ as follows: given two reward functions $r_1$ an  $r_2$, we say that $r_1 \equiv_{OPT^M} r_2$ if $<S, A, P, r_1, \gamma, \mu>$ and $<S, A, P, r_2, \gamma, \mu>$ have the same set of \textit{optimal policies}, and $r_1 \equiv_{ORD^M} r_2$ if $<S, A, P, r_1, \gamma, \mu>$ and $<S, A, P, r_2, \gamma, \mu>$ have the same \textit{ordering over policies} \footnotemark. 

If two reward functions have the same \textit{ordering} of policies, then they have the same set of \textit{optimal} policies (Section 2.4 in \cite{skalse2023misspecification}).

\footnotetext{\textbf{Notation}: $x \equiv_{P} y$ denotes $x$ and $y$ belong to the same partition $P$.}

\section{Privacy Leak in MEIR}
\label{sec:meir}

We study the privacy leak of MEIR in two situations: (i) Observer has access to the agent's true occupancy measure; (ii) Observer obtains a few demonstrations instead of the true occupancy measure. For (i), we can theoretically prove that there exists a privacy leak. For (ii), we provide a bound on the quality of the recovered reward function $\tilde{r}$ w.r.t $r$ in terms of the distribution over policies they induce.

\subsection{True Occupancy Measure: MEIR $=$ MERL}
\label{sec:meir_true}
We show this by hypothesizing that any policy that is a solution for MEIR can be computed by solving the MERL problem, where the rewards are multiplied by a positive scalar. 

\begin{lemma}
\label{lambda_lemma}
 Any policy $\bar{\pi}$ that is the solution of a Max Entropy Intentional  Randomization formulation $\text{MEIR}(r, E_{min})$ with a reward constraint $E_{min} \in [\hat{E}, E^*]$, can be expressed as the solution of the Maximum Entropy RL problem as, 
 \begin{equation}
     \bar{\pi} = RL(\lambda^*r)
 \end{equation}
 for some $\lambda^* \geq 0$.
\end{lemma}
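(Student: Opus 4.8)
The plan is to recognize $\text{MEIR}(r, E_{min})$ as a convex program and to invoke Lagrangian duality, identifying the optimal dual multiplier on the reward constraint with the scaling factor $\lambda^*$. First I would recast the problem in the space of occupancy measures, using the one-to-one correspondence between policies and occupancy measures noted in the background, so that $\pi$ and $\rho$ are used interchangeably. Let $\mathcal{D}$ denote the set of valid occupancy measures satisfying the Bellman-flow constraints; this is a convex polytope. In this space the objective $\mathcal{H}(\pi)$ is concave in $\rho$ (the $\gamma$-discounted causal entropy is a concave function of the occupancy measure, by the standard occupancy-measure formulation of MaxEnt RL), and the reward constraint $E_{\pi}[r] = \sum_{s,a}\rho(s,a)\,r(s,a) \ge E_{min}$ is linear in $\rho$. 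Hence $\text{MEIR}(r, E_{min})$ is a concave maximization over a convex set with a single linear inequality constraint, and strict concavity of the entropy guarantees the maximizer $\bar{\pi}$ is unique.

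Next I would form the Lagrangian
\[
\mathcal{L}(\rho, \lambda) = \mathcal{H}(\rho) + \lambda\big(E_{\rho}[r] - E_{min}\big), \qquad \lambda \ge 0,
\]
and argue that strong duality holds. The enabling condition is Slater's condition: for $E_{min} \in [\hat{E}, E^*)$ the optimal deterministic policy $\pi^*$ attains $E_{\pi^*}[r] = E^* > E_{min}$ and is therefore strictly feasible, so strong duality and the KKT conditions apply. At the optimum there is a multiplier $\lambda^* \ge 0$ (dual feasibility) such that the pair $(\bar{\rho}, \lambda^*)$ forms a saddle point of $\mathcal{L}$; in particular $\bar{\rho}$ maximizes $\mathcal{L}(\cdot, \lambda^*)$ over $\mathcal{D}$.

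The final step is to discard the constant term. Since $-\lambda^* E_{min}$ does not depend on $\rho$,
\[
\bar{\pi} = \underset{\pi}{\text{argmax}}\,\big\{\mathcal{H}(\pi) + \lambda^* E_{\pi}[r]\big\} = \underset{\pi}{\text{argmax}}\,\big\{E_{\pi}[\lambda^* r] + \mathcal{H}(\pi)\big\} = RL(\lambda^* r),
\]
which is exactly the MERL objective (Equation \eqref{eq:merl}) evaluated at the reward $\lambda^* r$. This establishes the claim, with $\lambda^* \ge 0$ inherited directly from dual feasibility.

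I expect the main obstacle to be the rigorous justification of strong duality, and in particular the boundary case $E_{min} = E^*$, where the constraint is tight and no strictly feasible point exists, so Slater's condition fails. Here I would argue by a limiting/continuity argument: as $E_{min} \uparrow E^*$ the optimal multiplier $\lambda^*$ grows without bound, and $RL(\lambda r)$ converges to the maximum-entropy policy among the reward-optimal policies, recovering the MEIR solution in the limit. A secondary point requiring care is confirming that the causal-entropy objective is genuinely concave in $\rho$ so that the convex-duality machinery applies; I would rely on the established occupancy-measure characterization of MaxEnt RL for this, and on uniqueness of the maximizer (from strict concavity of $\mathcal{H}$) to ensure the $\text{argmax}$ is well defined.
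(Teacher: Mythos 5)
Your proposal is correct and follows essentially the same route as the paper's own proof: recast MEIR over the occupancy-measure polytope, form the Lagrangian of the strictly concave entropy objective with the single linear reward constraint, invoke strong duality, and recover the primal optimum as the maximizer of the Lagrangian at the dual optimum, i.e.\ $\bar{\pi} = RL(\lambda^* r)$. The only difference is that you justify strong duality explicitly via Slater's condition and flag the boundary case $E_{min} = E^*$ (where strict feasibility fails and the dual optimum need not be attained by a finite $\lambda^*$), whereas the paper simply cites convexity of the feasible set, strict concavity of $\bar{\mathcal{H}}$, and Section 5.5.5 of Boyd and Vandenberghe without addressing that edge case --- so if anything your version is the more careful one.
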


Lemma \ref{lambda_lemma} shows that MEIR algorithm implicitly solves the RL objective with an additional temperature parameter $\lambda$ (dual optimum) to intentionally control the trade-off between reward and entropy maximization. The RL objective can also be viewed as a belief inducted reward function $L^{\pi}_{R}$ of the form \ref{eq:drl}, where the measure of deception $d^{\pi}_{R}(s,a) = -\log{\pi(a|s)}$. As $\lim_{\lambda \to 0}$, the entropy term dominates the reward term and we obtain a uniform random policy. Similarly, as $\lim_{\lambda \to \infty}$, reward dominates the entropy term and we obtain the optimal deterministic policy. Proof for Lemma \ref{lambda_lemma} is provided in Appendix \ref{lambda_lemma_proof}. We now prove that MEIR suffers a privacy leak when used against an MCE-IRL-based observer.
\begin{theorem}
\label{leak_theorem}
For an MDP $M$ let $\bar{\pi} = \text{MEIR}(r, E_{min})$ for any reward constraint $E_{min} > \hat{E}$, and $\rho_{\bar{\pi}}$ be its corresponding occupancy measure. If $\tilde{r} = \text{MCE-IRL}(\rho_{\bar{\pi}})$ is the reward function recovered by an observer using Maximum Entropy IRL, then $r \equiv_{ORD^M} \tilde{r}$.
\end{theorem}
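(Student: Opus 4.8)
The plan is to connect the MEIR policy to the MCE-IRL-recovered reward through the maximum-entropy RL solution, and then argue that the recovered reward $\tilde r$ can differ from $r$ only by transformations that preserve the ordering over policies. First I would invoke Lemma~\ref{lambda_lemma} to write $\bar\pi = RL(\lambda^* r)$ for some $\lambda^* \ge 0$. Because the constraint is $E_{min} > \hat E$ while $RL(0\cdot r)$ is the uniform random policy $\hat\pi$ attaining exactly $\hat E$, the scalar must be strictly positive, $\lambda^* > 0$; otherwise $E_{\bar\pi}[r] = \hat E < E_{min}$ would violate the constraint. This strictness is what later licenses the positive-scaling step.

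Next I would use the defining property of MCE-IRL recorded in Eq.~\eqref{eq:irl_rl_eq}: the recovered reward $\tilde r$ satisfies $\rho_{RL(\tilde r)} = \rho_{\bar\pi}$, i.e.\ running MERL on $\tilde r$ reproduces the observed occupancy measure. Since the MERL objective is strictly concave in the occupancy variables (the entropy term is strictly concave and the reward term is linear), its maximizer is unique, and via the one-to-one correspondence between policies and occupancy measures this gives $RL(\tilde r) = RL(\lambda^* r) = \bar\pi$. In other words, $\tilde r$ and $\lambda^* r$ induce the \emph{same} soft-optimal policy.

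The central step is to characterize reward functions that share a soft-optimal policy. I would write the soft Bellman optimality equations for both rewards and use $\log\bar\pi(a\mid s)=Q^{\text{soft}}(s,a)-V^{\text{soft}}(s)$. Equality of the policies forces the difference of the two soft $Q$-functions to depend only on the state, $Q_1(s,a)-Q_2(s,a)=g(s)$; substituting back into the soft Bellman equations yields $\lambda^* r(s,a)-\tilde r(s,a)=g(s)-\gamma\,E_{s'\mid s,a}[g(s')]$, which is exactly a potential-based shaping term with potential $\Phi=-g$. I expect this Bellman/telescoping manipulation to be the main obstacle, since it requires care with the soft value functions and relies on every state being reachable under $\mu$ and $P$ so that the policy pins down the soft advantages everywhere.

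Finally I would close the argument by showing both remaining transformations preserve $ORD^M$. For potential shaping, the discounted sum $\sum_t\gamma^t\bigl(\gamma\Phi(s_{t+1})-\Phi(s_t)\bigr)$ telescopes to $-\Phi(s_0)$, so $E_\pi[\cdot]$ changes by the policy-independent constant $-E_\mu[\Phi]$; hence every inequality between policies is preserved and $\tilde r\equiv_{ORD^M}\lambda^* r$. For positive scaling, $E_\pi[\lambda^* r]=\lambda^* E_\pi[r]$ with $\lambda^*>0$ preserves all inequalities $E_{\pi_1}[r]\ge E_{\pi_2}[r]$, giving $\lambda^* r\equiv_{ORD^M} r$. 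Chaining the two equivalences yields $r\equiv_{ORD^M}\tilde r$, as required.
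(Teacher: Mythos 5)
Your proposal is correct, and its skeleton matches the paper's proof exactly up to the final step: invoke Lemma~\ref{lambda_lemma} to get $\bar\pi = RL(\lambda^* r)$ with $\lambda^* > 0$, then use Equation~\ref{eq:irl_rl_eq} to get $\bar\pi = RL(\tilde r)$, so that $RL(\lambda^* r) = RL(\tilde r)$. Where you diverge is in how this last equality is converted into $r \equiv_{ORD^M} \tilde r$. The paper black-boxes this step through Lemma~\ref{lemma_scale}, which is itself deferred to Theorem~3.4 of Skalse et al.; you instead re-derive the needed fact from first principles: equality of the soft-optimal policies forces $Q_1(s,a)-Q_2(s,a)=g(s)$, whence $\lambda^* r - \tilde r$ is an (expected) potential-shaping term, and then both potential shaping and positive scaling are shown to preserve the ordering of policies by elementary telescoping and linearity arguments. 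This buys you two things the paper's route does not: the proof is self-contained (no reliance on an external misspecification-robustness theorem), and it exhibits the exact structure of the ambiguity class --- the observer's $\tilde r$ equals $\lambda^* r$ up to shaping --- which makes the privacy leak quantitatively explicit. You also supply the one-line argument for $\lambda^* > 0$ (that $\lambda^*=0$ would give the uniform policy with value $\hat E < E_{min}$), which the paper asserts but does not spell out. Two points deserve care in a full write-up: first, since transitions are stochastic, the reward difference is $g(s)-\gamma E_{s'\mid s,a}[g(s')]$ rather than a path-wise term $g(s_t)-\gamma g(s_{t+1})$, so the telescoping must be carried out in expectation using the tower property (the conclusion, a policy-independent additive constant $E_\mu[g]$, is unchanged); second, your uniqueness claim for the MERL maximizer (strict concavity of $\bar{\mathcal H}$ in the occupancy variables plus the policy--occupancy bijection) is exactly right and is needed to turn ``same occupancy measure'' into ``same policy,'' a step the paper glosses over when it equates $\bar\pi$ with $RL(\tilde r)$.
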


Theorem \ref{leak_theorem} states that the observer will recover a reward function that respects the \textit{ordering} of policies in the true reward function \textit{irrespective} of the reward threshold. This indicates that $\tilde{r}$ and $r$ share the same set of optimal policies as well. The proof for theorem \ref{leak_theorem} is built on the following lemma:
\begin{lemma} (Based on Theorem 3.4 in 
\cite{skalse2023misspecification}) 
\label{lemma_scale}
    For any two scalars, $\lambda_1, \lambda_2 \in \mathbb{R}^+$ and two reward functions $r_1, r_2$, if we have $RL(\lambda_1 r_1) = RL(\lambda_2 r_2)$, then $r_1 \equiv_{ORD} r_2$.
\end{lemma}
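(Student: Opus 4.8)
The plan is to route everything through the occupancy-measure view of expected reward together with the invariance properties of the MERL operator $RL(\cdot)$. Recall that $E_{\pi}[r] = \sum_s \sum_a \rho_{\pi}(s,a) r(s,a)$ is a \emph{linear} functional of the occupancy measure, so the relation $\equiv_{ORD}$ between two rewards is entirely determined by how their two linear functionals rank the points of the (convex) occupancy polytope. Consequently, it suffices to show that $E_{\pi}[r_1]$ and $E_{\pi}[r_2]$ are related, as functions of $\pi$, by a strictly increasing affine map; any such map preserves the induced order and hence yields $r_1 \equiv_{ORD} r_2$.

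First I would extract from the hypothesis $RL(\lambda_1 r_1) = RL(\lambda_2 r_2)$ a structural relation between the two reward functions. The MERL soft-optimal policy $\pi(a|s) \propto \exp(Q^*(s,a))$ is invariant under potential shaping $r(s,a) \mapsto r(s,a) + \gamma E_{s'\sim P(\cdot|s,a)}[\Phi(s')] - \Phi(s)$ and under constant shifts, since these shift the soft $Q$- and value functions by the same state-dependent amount and cancel in $Q^* - V^*$. Theorem 3.4 of \cite{skalse2023misspecification} supplies the converse characterization of the max-causal-entropy partition: two rewards inducing the same MERL policy differ by exactly such a transformation. Applying this to the hypothesis gives
\begin{equation*}
\lambda_2 r_2(s,a) = \lambda_1 r_1(s,a) + \gamma E_{s'\sim P(\cdot|s,a)}[\Phi(s')] - \Phi(s) + c
\end{equation*}
for some potential $\Phi$ and constant $c$.

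Next I would show that the shaping and constant terms contribute a \emph{policy-independent} constant to the expected reward. This is the standard telescoping argument: for any $\pi$, $E_{\pi}\big[\sum_t \gamma^t(\gamma\Phi(s_{t+1}) - \Phi(s_t))\big]$ collapses (up to the $(1-\gamma)$ occupancy normalization) to $-E_{s_0\sim\mu}[\Phi(s_0)]$, which does not depend on $\pi$, and $c$ likewise contributes a fixed amount to every policy. Hence $\lambda_2 E_{\pi}[r_2] = \lambda_1 E_{\pi}[r_1] + K$ for a constant $K$ shared by all $\pi$, i.e. $E_{\pi}[r_2] = (\lambda_1/\lambda_2)E_{\pi}[r_1] + K/\lambda_2$. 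Since $\lambda_1/\lambda_2 > 0$, this is a strictly increasing affine relation, so for every pair $\pi_a,\pi_b$ we have $E_{\pi_a}[r_1] \ge E_{\pi_b}[r_1] \iff E_{\pi_a}[r_2] \ge E_{\pi_b}[r_2]$, which is exactly $r_1 \equiv_{ORD} r_2$.

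The hard part will be the first step: pinning down precisely which transformations leave the MERL policy unchanged and, in particular, the converse direction. The forward claim (shaping and constant shifts preserve the policy) is a direct computation, but the statement that equality of the soft-optimal policies \emph{forces} the rewards to differ only by shaping plus a constant is the substantive content, and is where I would lean on Theorem 3.4 of \cite{skalse2023misspecification}. A secondary subtlety is bookkeeping the $(1-\gamma)$ occupancy normalization and confirming that whatever exact invariance class Theorem 3.4 produces (potential shaping, constant shift, and possibly $S'$-redistribution) all preserve expected reward up to the same policy-independent constant; fortunately each candidate transformation does, so the affine-map conclusion is robust to the precise form of the invariance.
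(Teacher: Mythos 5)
Your proof is correct, but it takes a genuinely different route from the paper's. The paper stays entirely inside the robustness-to-misspecification framework of \cite{skalse2023misspecification}: it views $RL(\lambda_1 \cdot)$ and $RL(\lambda_2 \cdot)$ as reward objects $c_{\psi_1}, c_{\psi_2}$ in the class $C = \{r \mapsto RL(\psi(r)\, r)\}$ and then unwinds definitions --- when $\lambda_1 \neq \lambda_2$, Theorem 3.4 says $c_{\psi_1}$ is $ORD^M$-robust to misspecification with $c_{\psi_2}$, and when $\lambda_1 = \lambda_2$ it falls back on $ORD^M$-admissibility; in either case $c_{\psi_1}(r_1) = c_{\psi_2}(r_2) \Rightarrow r_1 \equiv_{ORD^M} r_2$ is immediate. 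You instead invoke the partial-identifiability (invariance) characterization of the maximal-causal-entropy policy --- equal soft-optimal policies force the rewards to differ by potential shaping plus a constant --- and then run the telescoping argument to conclude that $E_{\pi}[r_2]$ is a strictly increasing affine function of $E_{\pi}[r_1]$, which indeed implies identical orderings. One caution: the result you lean on is \emph{not} the paper's Theorem 3.4. As rendered in the paper, Theorem 3.4 is the robustness-to-misspecification characterization, whereas your first step needs the invariance theorem for the MCE partition, a separate result in the same line of work (it can also be derived directly from the soft Bellman equations: if $Q_{r} - V_{r} = Q_{r'} - V_{r'}$ pointwise, then $\Phi = V_{r'} - V_{r}$ exhibits $r' - r$ as a shaping term). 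So your argument is sound, but the citation would need to be repointed if formalized. What your route buys: it is more elementary, avoids the case split on $\lambda_1 = \lambda_2$, and proves something quantitatively stronger --- the two value functionals are related by a positive affine map with slope $\lambda_1/\lambda_2$ --- which makes the nature of the privacy leak concrete. What the paper's route buys: a two-line proof given the cited machinery, and a form that extends immediately to non-constant weighting functions $\psi(r)$ rather than just scalar multiples.
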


The proof for lemma \ref{lemma_scale} is given in Appendix \ref{lemma_scale_proof}. Lemma \ref{lemma_scale} states that if two reward functions yield the same policy when optimizing the objective \ref{eq:merl} with any positive weight assigned to the reward term, then the two reward functions have the same ordering over policies.

\begin{proof}[Proof of Theorem \ref{leak_theorem}]
Let $\bar{\pi} = \text{MEIR}(r, E_{min})$ be the randomized policy for any reward threshold $E_{min} > \hat{E}$. From Lemma \ref{lambda_lemma}, we know that $\bar{\pi} = RL(\lambda^* r)$, where $\lambda^* > 0$.  Let $\tilde{r} = \text{MCE-IRL}(\rho_{\bar{\pi}})$ be the reward function recovered by the observer using MCE-IRL. From Equation \ref{eq:irl_rl_eq}, we know that $\bar{\pi} = \text{RL}(\tilde{r})$. Hence,\\
$ \text{RL}(\lambda^*r) = \text{RL}(\tilde{r})
\implies r \equiv_{ORD^M} \tilde{r} \Hquad (\text{Lemma } \ref{lemma_scale}) $
\end{proof}

\subsection{Limited Demonstrations:}
\label{subsec:meir_probs}
In section \ref{sec:meir_true}, we showed that the policy generated by the MEIR algorithm can be represented as a solution to the MERL objective, i.e, $\text{MEIR}(r, E_{min}) = RL(\lambda^* r)$. The solution of the MERL objective can also be interpreted as a mixture policy over the set of all stochastic policies $\Pi$, with the weight given to each policy proportional it's value. Formally, a mixture policy ${\pi}^{mix}$ contains a set of policies $\{\pi_1, ..., \pi_n\}$, and a distribution $w$ over these policies. Before each episode, a policy is sampled according to $w$ and executed for the entire trajectory. The probability of a sampling policy $\pi \in \Pi$ is given by \cite{mceirl}, 
\begin{equation}
\label{eq:prop}
    P^{MERL}(\pi|r) \propto E_{\pi}[r]
\end{equation}
Thus, learning this distribution (or a reward function that \textit{induces} this distribution over policies) will give an observer insight into the ordering over policies in $r$. During execution however, from Lemma \ref{lambda_lemma}, we know that the agent samples a policy according to the distribution,
\begin{equation*}
    P^{MERL}(\pi|\lambda^* r) \propto E_{\pi}[\lambda^* r]
\end{equation*}
where $\lambda \geq 0$ which the observer learns via maximum likelihood (MCE-IRL). This distribution preserves the same ordering of policies in $r$ and hence the accurate estimation of this distribution by an observer poses a significant privacy leak. 

The quality of the distribution learned (Total Variation (TV) distance from the true distribution) is highlighted in Proposition \ref{prop:sample_complexity}. 
\begin{proposition} \label{prop:sample_complexity}

For any MDP $M$, let $\bar{\pi} = \text{MEIR}(r, E_{min})$ for any reward constraint $E_{min} > \hat{E}$, i.e, $\bar{\pi} = RL(\lambda r)$ for some $\lambda > 0$ (Lemma \ref{lambda_lemma}) and let $\rho_{\bar{\pi}}$ be the empirical occupancy measure of $n$ demonstrations obtained by executing $\pi$ in $M$. If $\tilde{r} = \text{MCE-IRL}(\rho_{\tilde{\pi}})$, then,
\begin{equation}
    Pr(TV(P^{MERL}(\pi|\lambda^* r), P^{MERL}(\pi|\tilde{r})) > \epsilon) \leq \delta
\end{equation}
where, $\delta = \Theta(e^{|\Pi|-n\epsilon^2})$
\end{proposition}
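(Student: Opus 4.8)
The plan is to recognize that, after unwinding the definitions, the quantity in the proposition is the total-variation error of estimating a discrete distribution from i.i.d.\ samples, and then to control it with a standard concentration argument. By Lemma~\ref{lambda_lemma} the executed policy $\bar{\pi} = RL(\lambda^* r)$ is a MERL solution, so by the mixture-policy interpretation (Equation~\ref{eq:prop}) each episode draws a policy $\pi \in \Pi$ according to the Gibbs-type law $p \triangleq P^{MERL}(\pi \mid \lambda^* r)$. The $n$ demonstrations then correspond to $n$ i.i.d.\ draws $\pi_1,\dots,\pi_n \sim p$, summarized by the empirical occupancy measure $\rho_{\bar{\pi}}$. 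Throughout I would treat $\Pi$ as the finite support of this mixture (the deterministic policies), so that $|\Pi|$ is finite and the bound is meaningful.

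First I would establish the structural fact that the reward $\tilde{r} = \text{MCE-IRL}(\rho_{\bar{\pi}})$ induces, in the same mixture view, the \emph{empirical} distribution over policies: writing $\hat{p}_n(\pi) = \frac1n \sum_{i=1}^n \mathbf{1}[\pi_i = \pi]$, I would argue $P^{MERL}(\pi \mid \tilde{r}) = \hat{p}_n(\pi)$. This is the maximum-likelihood / maximum-entropy duality underlying MCE-IRL: maximizing the likelihood of the demonstrations within the exponential family $\{P^{MERL}(\cdot \mid \tilde{r})\}_{\tilde{r}}$ is equivalent to matching the empirical sufficient statistics (occupancy measures), and the unique member of the family consistent with those statistics is the empirical distribution itself. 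Once this holds, the target quantity collapses to $TV\big(P^{MERL}(\pi \mid \lambda^* r),\, P^{MERL}(\pi \mid \tilde{r})\big) = TV(p, \hat{p}_n)$.

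It then remains to bound $TV(p, \hat{p}_n)$. Using $TV(p, \hat{p}_n) = \max_{A \subseteq \Pi} \lvert p(A) - \hat{p}_n(A) \rvert$, for each fixed $A$ the estimate $\hat{p}_n(A)$ is an average of $n$ i.i.d.\ Bernoulli$(p(A))$ indicators, so Hoeffding's inequality gives $Pr\big(\lvert \hat{p}_n(A) - p(A) \rvert > \epsilon\big) \leq 2 e^{-2n\epsilon^2}$. A union bound over the $2^{|\Pi|}$ subsets of $\Pi$ yields $Pr\big(TV(p, \hat{p}_n) > \epsilon\big) \leq 2^{|\Pi|+1} e^{-2n\epsilon^2}$, which has the claimed form $\delta = \Theta(e^{|\Pi| - n\epsilon^2})$ once the multiplicative constant and the $\ln 2$ from $2^{|\Pi|} = e^{|\Pi|\ln 2}$ are absorbed into the $\Theta$.

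The main obstacle is the structural step, i.e.\ justifying $P^{MERL}(\pi \mid \tilde{r}) = \hat{p}_n$. The concentration step is textbook, but care is required because MCE-IRL matches occupancy measures rather than the policy-level distribution directly, and in general many policy mixtures realize the same occupancy measure. The argument must therefore lean on the MaxEnt duality that uniquely selects the empirical distribution within the exponential family consistent with the observed statistics; establishing this cleanly, and handling the continuum of stochastic policies by restricting to the finite deterministic support, is where the real work lies.
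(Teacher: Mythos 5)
Your proposal is correct and takes essentially the same route as the paper: the paper's entire proof is a one-line appeal to a cited TV-concentration result together with the observation that $P^{MERL}(\pi\mid\tilde{r})$ is the maximum-likelihood estimator of $P^{MERL}(\pi\mid\lambda^* r)$, which are exactly your two steps (the structural MLE identification and the concentration bound). Your Hoeffding-plus-union-bound argument over the $2^{|\Pi|}$ subsets is precisely the standard proof of the inequality the paper cites, so you have merely filled in the details the paper delegates to its reference; the subtlety you flag (that MCE-IRL matches occupancy measures rather than the policy-level distribution directly) is likewise glossed over by the paper itself.
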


Proposition \ref{prop:sample_complexity} directly follows from \cite{tv_bound} given that $P(\pi|\tilde{r})$ is the maximum likelihood estimator of $P(\pi|\lambda r)$.

\begin{figure*}
    \centering
    \includegraphics[scale=0.5]{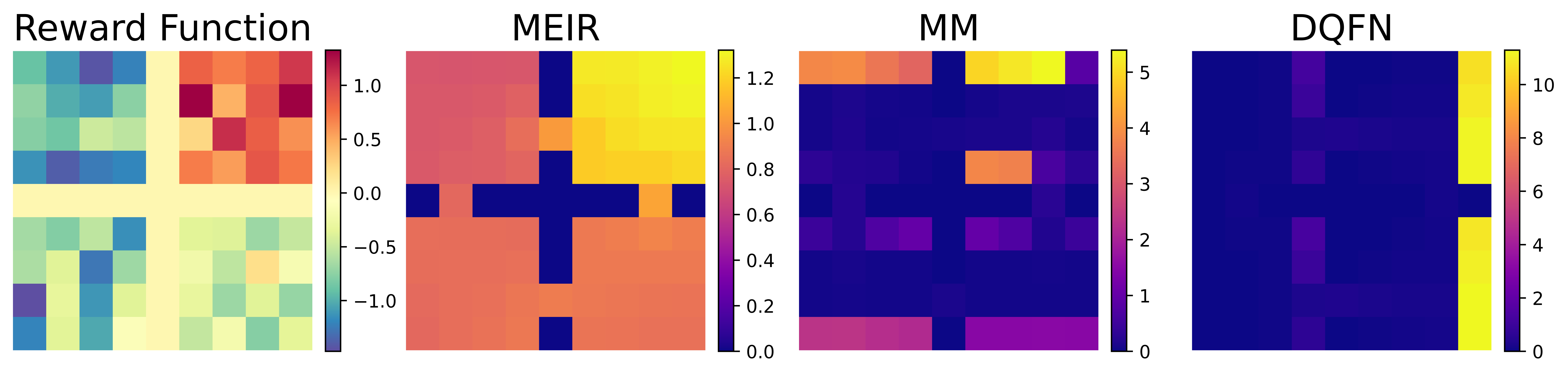}
    \caption{Occupancy measures of different private policies satisfying the same reward constraint in the Four Rooms environment. The MM algorithm leads to policies that visit a diverse mix of high reward and low reward states.}
    \label{fig:show_traj}
\end{figure*}

\section{The Max Misinformation Algorithm}
\label{sec:mm}
To address the privacy leak of MEIR, we introduce a novel algorithm referred to as the Max Misinformation (MM) algorithm. MM uses a deceptive measure called an anti-reward to incentivize the agent to \textit{stay away from} the optimal trajectories. That is to say, MM intentionally leads the agent to take sub-optimal trajectories, in a bid to fool the observer into believing that these trajectories are highly rewarding. This makes MM a \textit{simulation} based Deceptive RL algorithm.



Formally, let $r^{-}(s,a)$ be an anti-reward that induces the agent to take sub-optimal trajectories, then the MM algorithm solves the following constrained optimization problem, 
\begin{align*}
    \text{MM}(r, r^-, E_{min}) = \Hquad &\underset{\pi}{\text{argmax}} \Hquad E_{\pi} [r^{-}(s,a)] \Hquad \\ &\textbf{s.t.} \Hquad E_{\pi} [r(s,a)] \geq E_{min} \numberthis \label{eq:mm}
\end{align*}
where $E_{min} \in [E^{-}, E^*]$ is the reward threshold that is used to control the privacy-expected reward tradeoff. $E^{-} = E_{\pi^{-}}[r(s,a)]$ and $E^* = E_{\pi^*}[r(s,a)]$ where $\pi^{-}$ and $\pi^{*}$ correspond to the optimal policies with respect to the anti-reward, $r^{-}$ and actual reward, $r$.  We describe mechanisms for computing anti-reward in Section~\ref{subsec:gen_anti_r}.

The MM formulation is a linear program (Appendix \ref{eq:mm_lp}) and hence the primal optimum $\bar{\pi}$ can be uniquely recovered from the dual optimum $\lambda^*$ \cite[Section 5.5.5]{DBLP:books/cu/BV2014} as, 
\begin{equation}
\label{eq:mm_dual}
    \bar{\pi} = \underset{\pi}{\text{argmax }}E_{\pi}[\lambda^* r(s,a) + r^{-}(s,a)]
\end{equation}
Equation \ref{eq:mm_dual} is a form of the Deceptive RL objective \ref{eq:drl}, where $d^{\pi}_{R}(s,a) = r^-(s,a)$ and the dual variable $\lambda^*$ acts as a temperature parameter controlling the trade-off between reward and deception maximization. As $\lim_{\lambda^* \to 0}$, the anti-reward dominates the reward resulting in $\pi^-$, and as $\lim_{\lambda^* \to +\infty}$ the reward dominates the anti-reward resulting in $\pi^*$ as the solution to Equation \ref{eq:mm_dual}.

The linear program formulation of MM presented in Appendix \ref{eq:mm_lp} relies on known model dynamics. However, this assumption is often impractical in real-world scenarios. To address this limitation, we introduce Algorithm \ref{alg:mm_pd_descent}, which demonstrates how to solve Equation \ref{eq:mm} using primal-dual descent without requiring explicit knowledge of the model dynamics. This formulation is particularly useful in the context of large MDPs with continuous state and (or) action spaces. In such scenarios, solving the primal problem to convergence (Line \ref{conv_line} of Algorithm \ref{alg:mm_pd_descent}) can be very time-consuming. Instead, one could take a few steps towards maximizing the objective function: Line~\ref{conv_line} and then incrementally optimize $\lambda$ and so on. 

\begin{algorithm}
\caption{Max Misinformation via Primal-Dual Descent}\label{alg:mm_pd_descent}
\begin{algorithmic}[1]
\State \textbf{Input:} Anti-reward $r^{-}$, $E_{min} \in [E^-, E^*]$
\State Initialize temperature parameter $\lambda >= 0$, learning rate $\alpha$
\For{t in \{0, 1, 2, $\ldots$\}} 
    \State \label{conv_line} $\bar{\pi}$ = max $E_{\pi}[\lambda_t r(s,a) + r^-(s,a)]$ 
    \State $\lambda_{t+1} \leftarrow \lambda_{t} - \alpha \nabla_{\lambda} \left[\lambda [E_{\bar{\pi}} r(s,a) - E_{min}]\right]$ 
\EndFor
\end{algorithmic}
\end{algorithm}

In the case of discrete MDPs, where solving the primal problem is much faster, we can use binary search to speed up the optimization procedure significantly as highlighted in Appendix \ref{app:binary_search}. 

\subsection{Security of the Max Misinformation Algorithm}

In Section \ref{sec:meir}, we highlighted that the privacy leakage in the MEIR algorithm was due to the fact that MEIR is an instance of MERL, which preserves the ordering of policies which can be learnt efficiently by the observer. The proposed MM algorithm addresses this privacy leakage as the addition of an anti-reward does not preserve the ordering over policies as it assigns a high value to sub-optimal trajectories. Consequently, it does not preserve the set of optimal policies either.

The difference between the occupancy measures of the MEIR and the MM algorithms are highlighted in Figure \ref{fig:show_traj}. Despite the MEIR policy exhibiting higher entropy, it readily reveals locations with high rewards. In contrast, the MM policy visits a nuanced mix of high and low reward states, making it more challenging to discern important locations.
 
\subsection{Generating anti-reward functions} 
\label{subsec:gen_anti_r}
We now describe mechanisms for generating anti-reward functions that maximize deception by steering an agent away from the optimal trajectories.

Ideally, we would like to maximize the distance between the true reward function and the recovered reward function, but this would make the deceptive policy specific to an IRL algorithm (as recovered reward function is dependent on the algorithm). Instead, to ensure robustness against the observer reward recovery methods (IRL or some other mechanism), we propose a mechanism that is agnostic to the specific algorithm utilized to recover the reward function. 


Intuitively, we compute an anti-reward that maximizes the distance between a distribution/statistic corresponding to the optimal policy for the original reward and optimal policy for the anti-reward. This will ensure that observer receives minimal information about the optimal policy for the original reward.
Let $o$ be a distribution/statistic that can be computed from the agent's reward function $r$. Two examples of $o$ would be the policy computed or occupancy distribution corresponding to the reward function $r$. We can generate an anti-reward function $r^{-}$ by maximizing the distance between $o^{*}$ and $o^{-}$, where $o^{*}$ is observed when behaving optimally according to $r$ and $o^{-}$ is observed when behaving optimally according to $r^{-}$. The algorithm for computing the anti-reward is provided in Algorithm \ref{alg:gen_anti_reward}.  Let $C$ be the function that maps from $r$ to $o$. We iteratively do the following steps by starting from a randomly initialised $o^-$: (1) Set $r^{-}$ as the anti-reward function that maximises the distance between $o^{*}$ and $o^{-}$ (2) Compute the new $o^{-}$ from the new value of $r^{-}$. We repeat this process for a set number of iterations. An intuition behind why this approach works is highlighted in Appendix \ref{app:idea_anti}. 

\begin{figure*}
\centering
\includegraphics[width=2\columnwidth]{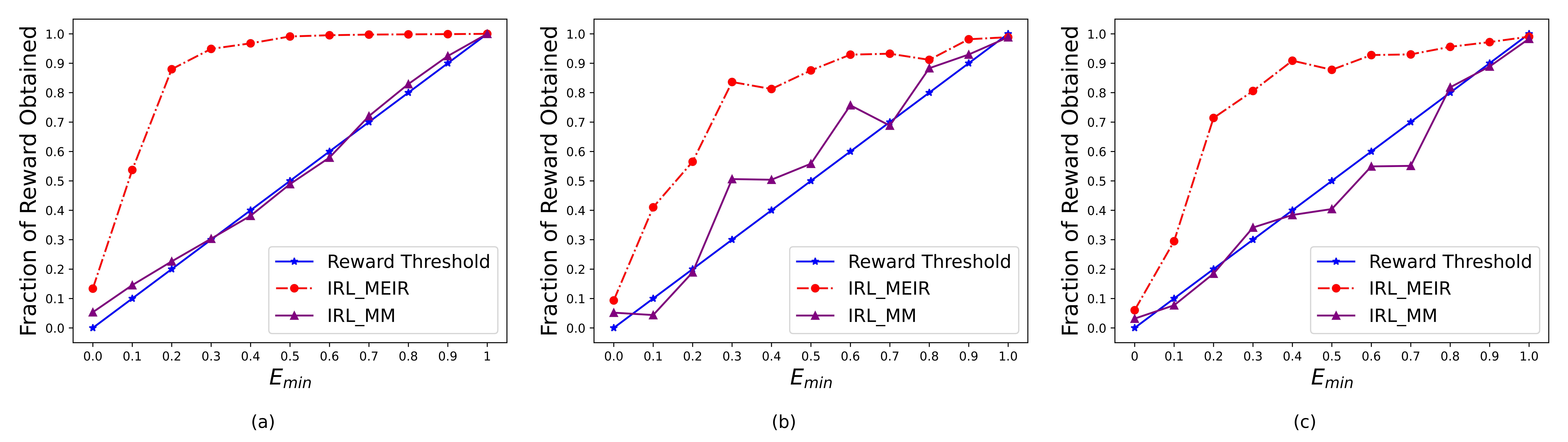}
\caption{MM and MEIR against IQ-Learn given $10$ demonstrations. Figures correspond to (a) Cyber security domain, (b) Frozen Lake and (c) Random MDPs.}
\label{fig:comp_meir_iq}
\end{figure*}

\begin{algorithm}
\caption{Generating Anti-Reward functions}\label{alg:gen_anti_reward}
\begin{algorithmic}[1]
\State \textbf{Input:} Distance Metric $D$, $c$, $o^{*}$ 
\State Initialize $o^{-}$
\For{t in \{0, 1, 2, $\ldots$\}} 
    \State $r^{-} = \underset{r}{\text{argmax}} \Hquad D(o^{*}, o^{-})$ \label{div_line}
    \State $o^{-} = C(r^{-})$
\EndFor
\end{algorithmic}
\end{algorithm}

We will now outline the various forms of $o$ (occupancy measures and trajectory distributions) and the corresponding distance measures, denoted as $D$, applied in each case.

\paragraph{Occupancy Measures}
Since IRL algorithms try to match occupancy measures, one could try to directly maximize the distance between the occupancy measures of the optimal policy of $r$, i.e, $\rho^*$ and $\rho^{-}$. Hence, in this case, $o = \rho$. We use $f$-divergences and Integral Probability Metrics (IPMs) to measure the distance between $\rho^*$ and $\rho^{-}$. 

The $f$-divergence between $\rho$ and $\rho^{*}$ is defined using the convex conjugate $f^*$ as, 
\begin{align*}
    D_{f}(\rho^{*}||\rho^-) &= \Hquad \underset{g: D \rightarrow R}{sup} \Hquad E_{\rho^{*}}[g(s,a)] - E_{\rho^{-}}[f^*(g(s,a))] 
\end{align*}
setting $g = -r^{-}$ and $\phi(u) = -f^*(-u)$, so that $E_{\rho_-}$ is maximized, 
\begin{align}
\label{eq:f_div}
D_{f}(\rho^{*}||\rho^-) &= \Hquad \underset{r^-: D \rightarrow R}{sup} \Hquad E_{\rho_{-}}[\phi(r^-(s,a))]] - E_{\rho^{*}}[r^{-}(s,a)]
\end{align}

IPMs that are parameterized by a family of functions $\mathcal{F}$ are defined,  
\begin{align}
\label{eq:ipm}
    \gamma_{\mathcal{F}}(\rho^{-}, \rho^*) = \underset{f \in \mathcal{F}}{sup} \Hquad |E_{\rho^{-}}[f(s,a)] - E_{\rho^{*}}[f(s,a)]|
\end{align}

We can see that in both IPMs  (Equation \ref{eq:ipm}) and f-divergences (Equation \ref{eq:f_div}), the anti-reward function gives a high reward to the state-action pairs visited by $\pi^{-}$ (due to the $sup$ and expectation over $\rho^-$ being the first term) and a low reward to the ones visited by $\pi^*$. In both these cases, the anti-reward function can be represented using a function approximator and solved using gradient ascent, or in the case of discrete environments, Equation \ref{eq:f_div} can be solved using the closed form solution (Table \ref{table:closed_f_div} in Appendix \ref{app:idea_anti}).


\paragraph{Trajectory Distributions}
We can observe from Equation \ref{eq:prop} and Figure \ref{fig:show_traj} that the MEIR algorithm suffers from the problem of preferring highly rewarding trajectories (policies in the case of stochastic dynamics). To avoid this, we can generate an anti-reward function that maximizes the distance between the trajectory distributions of $\pi^*$ and $\pi^{-}$. Consider the objective of maximizing the KL-divergence, between the distribution over trajectories induced by a policy $\pi$ and the optimal policy $\pi^{*}$,
\begin{equation}
\begin{aligned} 
    J^- &= \Hquad \underset{\pi}{\text{argmax}} \Hquad KL\left(p(\tau)||p(\tau^*)\right) \\
        &= \Hquad \underset{\pi}{\text{argmax}} \Hquad \int p(\tau) \log\left(\frac{\prod_{t} p(s_{t+1}|s_t, a_t)\pi(a_t|s_t)}{\prod_{t} p(s_{t+1}|s_t, a_t)\pi^*(a_t|s_t)}\right) d\tau\\
        &= \Hquad \underset{\pi}{\text{argmax}} \Hquad -\mathcal{H}(\pi) + E_{\tau\sim\pi}[\sum_{t=0}^T-\log{\pi^*(a_t|s_t)}] \\
        &= \Hquad \underset{\pi}{\text{argmax}} \Hquad E_{\tau\sim\pi}[r^{-}_{KL}(s_t, a_t)]
\end{aligned}
\end{equation}
This formulation is the standard RL objective with the anti-reward function $r_{-}^{KL} \triangleq -\log(\pi^*(a|s))$ \footnotemark. We can drop the entropy term as the formulation calls for entropy \textit{minimization} given that an MDP has an optimal deterministic solution, i.e., a policy with 0 entropy.

\footnotetext{If $\pi^*$ is deterministic $\log(\pi^*(a|s))$ is not defined when $\pi^*(a|s) = 0$. This can be avoided by setting $\pi^*$ as the a solution of MERL which ensures all actions have non-zero support.}

\section{Experiments and Results}
\label{sec:exps}

In this section, we intend to answer the following key questions: (1) Does the MEIR algorithm as described in Section \ref{sec:meir} suffer a significant privacy leak in the case of limited demonstrations, and how does the MM algorithm perform in comparison? (2) How does the MM algorithm fare in comparison to the MEIR and DQFN algorithms in preserving the privacy of the reward function when the observer has access to the true occupancy measures of the agent? (3) How does MM perform against observers that know they are being deceived?

\subsection{Environments and Evaluations Metrics}

We conduct our experiments in the following environments: Cyber Security which is based on Moving Target Defence \cite{mtd}, Frozen Lake \cite{gym}, Four Rooms \cite{minigrid} and randomly generated MDPs. In the Cyber Security \cite{mtd} environment, the state space consists of network configurations generated by the CyberBattleSim library \cite{msft:cyberbattlesim} that emulates real-world active directory networks. The value associated with a network configuration corresponds to the ease with which malicious actors could compromise and gain control over the network. Agent's objective is to dynamically switch between network configurations to bolster security. The intention behind using Frozen Lake \cite{gym} and Four Rooms \cite{minigrid} from the standard Gym environment draws inspiration from tangible real-world security challenges such as Police Patrolling \cite{chen2013police} and Green Security Games (GSGs) \cite{GSG_1}. Randomly generated MDPs can be representative of a broad set of domains in general. A more detailed description is provided in Appendix \ref{app:envs}. The reward function in the above domains reflects the user's preferences, encompassing critical aspects such as the valuation of patrol locations, density of animals in various regions, and significance of distinct network configurations, all of which must be kept private.

We consider Inverse Reinforcement Learning (IRL) as the main method for reward function reconstruction due to two primary considerations: (a) Observer does not know that they are being deceived, which is the assumption made in the prior deception based methods as well \cite{deceptiverl, deceptiverl2} (b) Recovering the reward function from deceptive demonstrations is not a trivial task. Consider the example in Figure \ref{fig:show_traj} - the agent visits multiple diverse locations and deducing the high reward states from them is not easy for the observer. In addition, there does not exist any prior work in this space to benchmark our algorithms against. In such a case, irrespective of whether the observer knows that they are being deceived, learning a reward function that maximises the likelihood of the demonstrations is a strong strategy for the observer given that they are guaranteed at least $E_{min}$ reward. 

In the IRL space, MCE-IRL based methods have demonstrated superior performance in recent times for reward reconstruction \cite{gail, iql} and hence we use MCE-IRL and IQ-learn \cite{iql} (a variant of MCE-IRL that performs well in the limited demonstrations setting) in our experiments. Furthermore, we introduce two additional baselines that try to account for observers that are aware that they are being deceived by the usage of the MM algorithm. Given that the observer knows that the agent is intentionally visiting sub-optimal states along with the optimal states, the observer can cluster the occupancy measures of the agent and selectively recover a reward function that matches just one (or more) of them. See Figure \ref{fig:show_traj} and \ref{fig:comp_modes} - each cluster either contains the optimal states or the misleading states intentionally visited by the agent. Based on how we select the cluster(s), we split these methods into: (a) $\text{IRL}^{\text{random}}$ that picks a cluster at random and (b) $\text{IRL}^{\text{max}}$ that greedily picks the cluster that has the highest occupancy measure.

We use three metrics to evaluate the quality of the reward function learned by the IRL algorithms, namely: (1) Pearson Correlation: high value indicates better learning; (2) EPIC distance \cite{epic}: low value indicates better learning); and (3) Evaluation of the optimal policy of the recovered reward function in the original reward function: higher expected reward implies better learning. 

\subsection{Analysis of Results}
\paragraph{Privacy Leak in MEIR and the efficacy of MM}

Figures \ref{fig:comp_meir_iq}, \ref{fig:comp_meir_mce} and \ref{fig:comp_deceptive_irl} can be interpreted as follows: for a given private RL algorithm. First, we specify a reward constraint $E_{min}$ that is represented on the x-axis. Next, the algorithms return a private policy with a return $\geq E_{min}$ that is plotted on the y-axis. The Reward Threshold line indicates the return of the generated policy. Post this, an observer takes the occupancy measure of the private policy as input and recovers a reward function $\tilde{r}$. The return of the optimal policy $\pi^{*}_{\tilde{r}}$ of $\tilde{r}$ when evaluated in $r$ is represented using the IRL line. 

From Figure \ref{fig:comp_meir_iq}, we can infer that the reward recovered by $\pi^{*}_{\tilde{r}}$ is very high, almost the same as the optimal policy $\pi^*_{r}$, even when $E_{min}$ is very low and when the observer is given just $10$ trajectories to learn from. Therefore validating the insights presented in Section \ref{subsec:meir_probs}. MM algorithm does a much better job in preserving the reward privacy in this case. Figure \ref{fig:comp_meir_mce} shows that MM significantly outperforms MEIR in the worst case, i.e when an observer has access to the true occupancy measures of the agent. A quantitative analysis for this case is present in Table \ref{table:comp_meir_app} in Appendix \ref{app:comp_meir}.

\paragraph{Advantage over DP-based methods}

\begin{figure}
\includegraphics[width=\columnwidth]{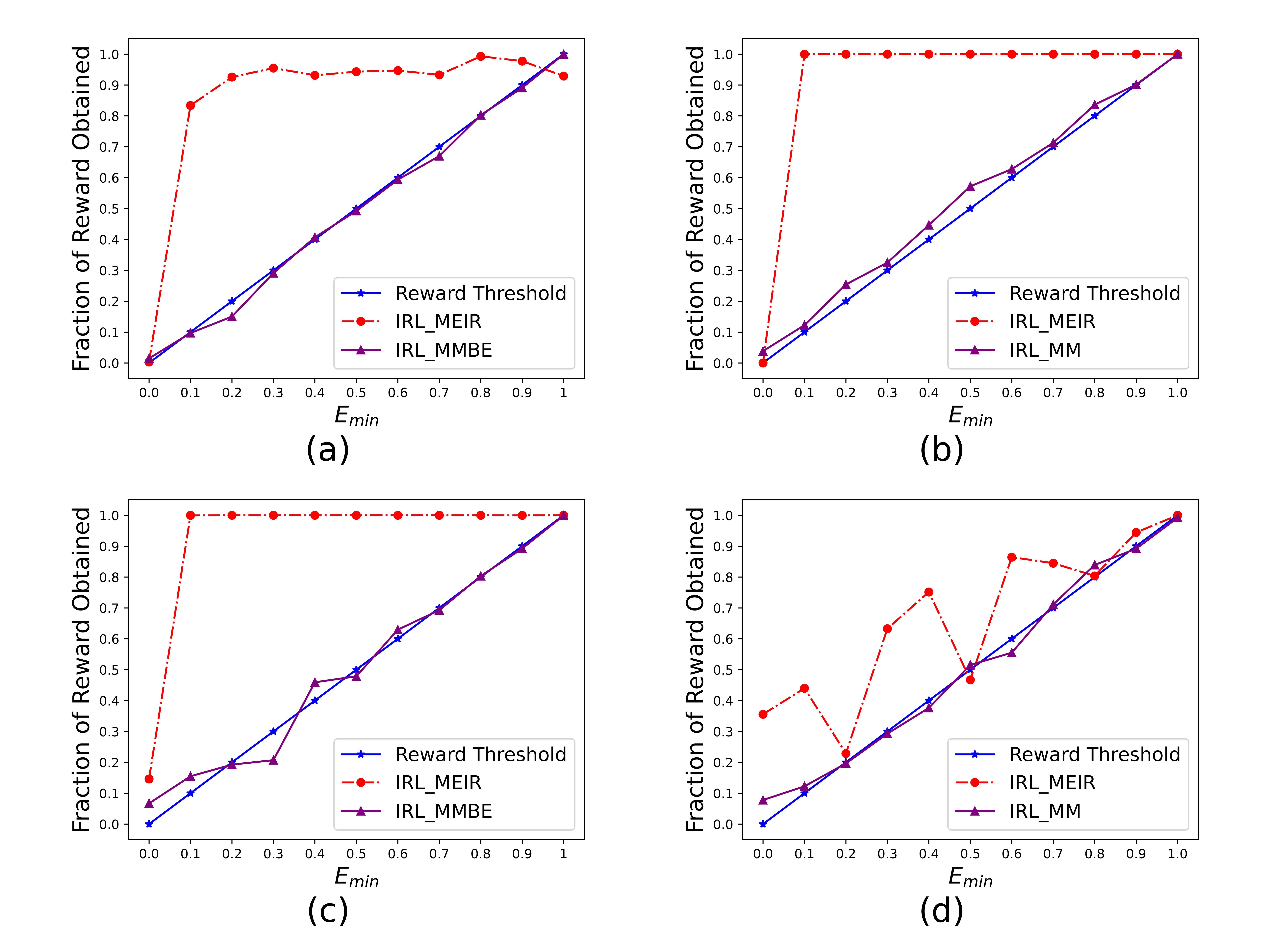}
\caption{MM and MEIR against MCE IRL with true occupancy measures. Figures correspond to (a) Four Rooms (b) Frozen Lake (c) Random MDPs (d) Cyber Security domain.}
\label{fig:comp_meir_mce}
\end{figure}

\begin{table}[t]
\caption{MM and the DQFN algorithms against an MCE IRL based observer with access to the true occupancy measure.}
\label{table:comp_dqfn}
\begin{adjustbox}{width=\columnwidth}
\begin{tabular}{|llcc|}
\hline
\textbf{Env} & \textbf{Algorithm} & \textbf{Avg. Pearson} & \textbf{Avg. EPIC} \\
\hline
\multirow{3}{*}{Random MDP} & DQFN & 0.31 & 0.58\\
&  MM ($\tau$-based) & \textbf{0.30} & \textbf{0.58}\\
 &  MM ($\rho$-based) & 0.67 & 0.39\\
\hline
\multirow{3}{*}{Four Rooms} & DQFN & 0.09 & 0.67\\
&  MM ($\tau$-based) & \textbf{0.03} & \textbf{0.69}\\
&  MM ($\rho$-based) & 0.22 & 0.62\\
\hline
\multirow{3}{*}{Frozen Lake} & DQFN & 0.11 & 0.67\\
&  MM ($\tau$-based) & \textbf{0.05} & \textbf{0.72}\\
&  MM ($\rho$-based) & 0.34 & 0.57\\
\hline
\end{tabular}
\end{adjustbox}
\end{table}

\begin{figure}
\includegraphics[width=\columnwidth]{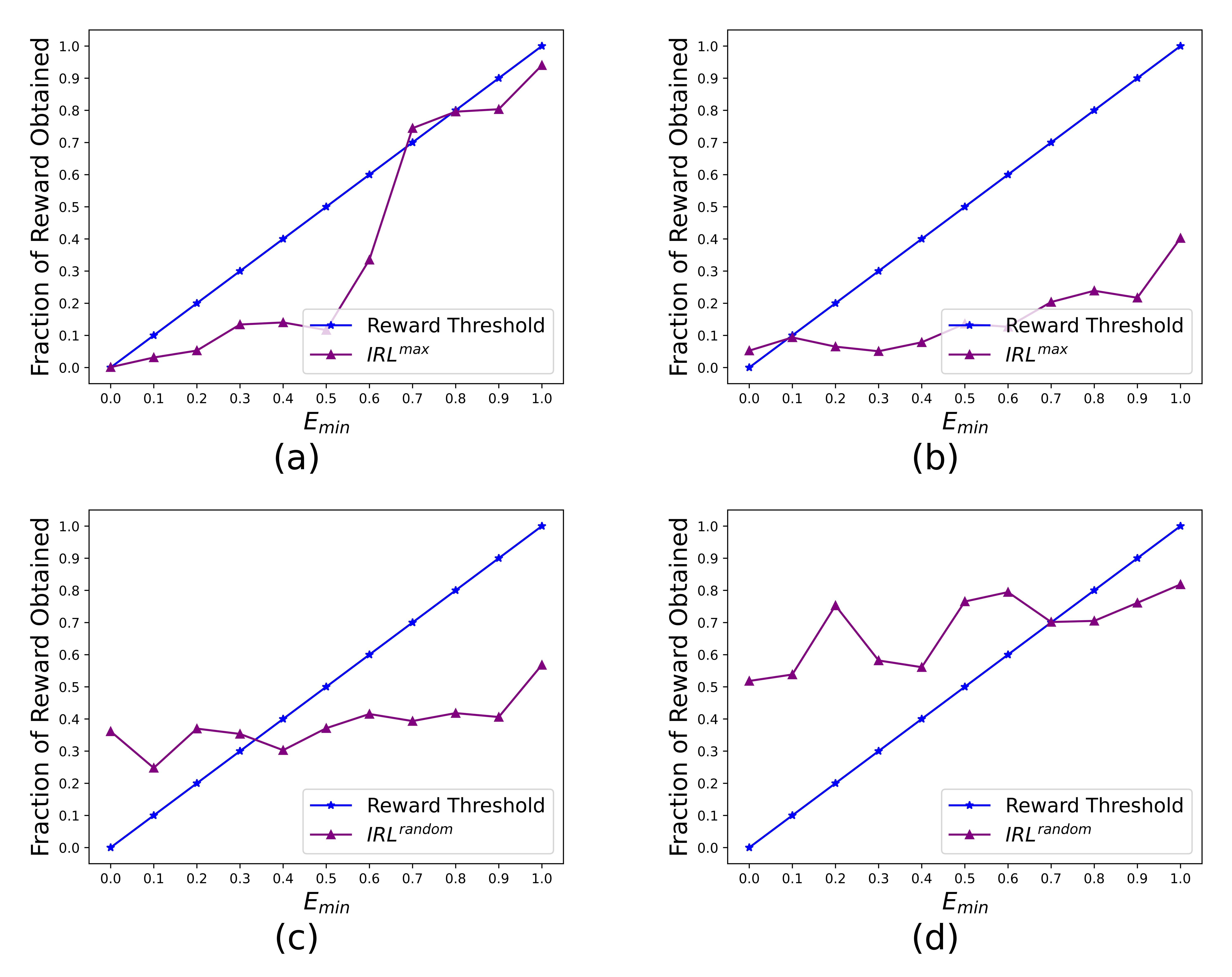}
\caption{MM against $\text{IRL}^{\text{max}}$ in (a) Four Rooms (b) Frozen Lake and $\text{IRL}^{\text{random}}$ in (c) Random MDP (d) Cyber Security domain.}
\label{fig:comp_deceptive_irl}
\end{figure}

The quantitative comparison between the Deep Q-learning with Functional Noise (DQFN) \cite{dqfn}, a state-of-the art algorithm in DP-based privacy methods and the MM algorithm in Table \ref{table:comp_dqfn} demonstrates that the MM algorithm with a trajectory-based anti-reward outperforms the DQFN algorithm. Additionally, a qualitative comparison (Figure \ref{fig:show_traj}) shows that the MM algorithm visits a significantly more diverse set of misleading states compared to the DQFN algorithm. In addition, as discussed in Section \ref{sec:intro}, the MM algorithm has an advantage over existing DP-based methods based on the following: (1) There is a direct relationship between the privacy budget $E_{min}$ and the reward obtained by the corresponding policy. The noise parameter in DQFN has no such relationship (Figure \ref{fig:dp_rew}), making it challenging to control the privacy/reward tradeoff and (2) The quality of the recovered reward function is proportional to the privacy budget $E_{min}$ as seen in Figures \ref{fig:comp_meir_iq} and \ref{fig:comp_meir_mce}, another important property that DP-based algorithms do not posses \cite{Prakash_Husain_Paruchuri_Gujar_2022}.

\paragraph{Effectiveness against observers who are aware of the use of deception by the agent}
Figure \ref{fig:comp_deceptive_irl} contains evaluations of MM against an observer that is aware of the use of deception and subsequently uses $\text{IRL}^{\text{max}}$ / $\text{IRL}^{\text{random}}$ to recover the reward function. From Figure \ref{fig:comp_deceptive_irl}, we can infer (1) robustness of the MM algorithm in preserving the privacy of the reward function in this scenario and (2) weakness of these reward recovery mechanisms as compared to IRL indicating the difficulty of reward recovery when faced with deceptive demonstrations and the dominance of IRL as a strategy for the observer.

\section{Conclusion and Future Work}

RL-based planning algorithms have found applications in many domains including security related where protection of the reward function from potential observers becomes critical. Our study identifies vulnerabilities and limitations in existing methodologies and proposes the Max Misinformation (MM) algorithm as a solution. While our experiments are limited to discrete state/action spaces, MM can be used in continuous settings, making for future research endeavors. Furthermore, our research underscores the limitations of Inverse Reinforcement Learning (IRL) when confronted with deceptive demonstrations, prompting further exploration into Deceptive IRL.

\section{Acknowledgments}
This research/project is supported by the National Research Foundation Singapore and DSO National Laboratories under the AI Singapore Programme (AISG Award No: AISG2-
RP-2020-016)

\bibliography{main.bib}

\begin{thebibliography}{38}
\providecommand{\natexlab}[1]{#1}
\providecommand{\url}[1]{\texttt{#1}}
\expandafter\ifx\csname urlstyle\endcsname\relax
  \providecommand{\doi}[1]{doi: #1}\else
  \providecommand{\doi}{doi: \begingroup \urlstyle{rm}\Url}\fi

\bibitem[Bell(2003)]{article}
J.~B. Bell.
\newblock Toward a theory of deception.
\newblock \emph{International Journal of Intelligence and CounterIntelligence}, 2003.

\bibitem[Bowyer(1982)]{bowyer1982cheating}
J.~B. Bowyer.
\newblock \emph{Cheating: Deception in war \& magic, games \& sports, sex \& religion, business \& con games, politics \& espionage, art \& science}.
\newblock St Martin’s Press, 1982.

\bibitem[Boyd and Vandenberghe(2014)]{DBLP:books/cu/BV2014}
S.~P. Boyd and L.~Vandenberghe.
\newblock \emph{Convex Optimization}.
\newblock Cambridge University Press, 2014.

\bibitem[Canonne(2020)]{tv_bound}
C.~L. Canonne.
\newblock A short note on learning discrete distributions, 2020.

\bibitem[Carson(2010)]{10.1093/acprof:oso/9780199577415.001.0001}
T.~L. Carson.
\newblock \emph{{Lying and Deception: Theory and Practice}}.
\newblock Oxford University Press, 04 2010.

\bibitem[Chen(2013)]{chen2013police}
X.~Chen.
\newblock Police patrol optimization with security level functions.
\newblock \emph{IEEE Transactions on Systems, Man, and Cybernetics: Systems}, 2013.

\bibitem[Chevalier-Boisvert et~al.(2023)Chevalier-Boisvert, Dai, Towers, de~Lazcano, Willems, Lahlou, Pal, Castro, and Terry]{minigrid}
M.~Chevalier-Boisvert, B.~Dai, M.~Towers, R.~de~Lazcano, L.~Willems, S.~Lahlou, S.~Pal, P.~S. Castro, and J.~Terry.
\newblock Minigrid \& miniworld: Modular \& customizable reinforcement learning environments for goal-oriented tasks.
\newblock \emph{CoRR}, abs/2306.13831, 2023.

\bibitem[Dwork et~al.(2006)Dwork, McSherry, Nissim, and Smith]{10.1007/11681878_14}
C.~Dwork, F.~McSherry, K.~Nissim, and A.~Smith.
\newblock Calibrating noise to sensitivity in private data analysis.
\newblock In \emph{Theory of Cryptography}, 2006.

\bibitem[Fang et~al.(2015)Fang, Stone, and Tambe]{GSG_1}
F.~Fang, P.~Stone, and M.~Tambe.
\newblock When security games go green: Designing defender strategies to prevent poaching and illegal fishing.
\newblock In \emph{IJCAI}, 2015.

\bibitem[Fleischman(2018)]{cornellpolice}
T.~Fleischman.
\newblock Dashcam images reveal where police are deployed, 2018.

\bibitem[Garg et~al.(2021)Garg, Chakraborty, Cundy, Song, and Ermon]{iql}
D.~Garg, S.~Chakraborty, C.~Cundy, J.~Song, and S.~Ermon.
\newblock Iq-learn: Inverse soft-q learning for imitation.
\newblock In \emph{NeurIPS}, 2021.

\bibitem[Gleave et~al.(2020)Gleave, Dennis, Wild, Kant, Levine, and Russell]{DBLP:conf/iclr/GleaveDWKLR20}
A.~Gleave, M.~Dennis, C.~Wild, N.~Kant, S.~Levine, and S.~Russell.
\newblock Adversarial policies: Attacking deep reinforcement learning.
\newblock In \emph{ICLR}, 2020.

\bibitem[Gleave et~al.(2021)Gleave, Dennis, Legg, Russell, and Leike]{epic}
A.~Gleave, M.~Dennis, S.~Legg, S.~Russell, and J.~Leike.
\newblock Quantifying differences in reward functions.
\newblock In \emph{ICLR}, 2021.

\bibitem[Gleave et~al.(2022)Gleave, Taufeeque, Rocamonde, Jenner, Wang, Toyer, Ernestus, Belrose, Emmons, and Russell]{gleave2022imitation}
A.~Gleave, M.~Taufeeque, J.~Rocamonde, E.~Jenner, S.~H. Wang, S.~Toyer, M.~Ernestus, N.~Belrose, S.~Emmons, and S.~Russell.
\newblock imitation: Clean imitation learning implementations.
\newblock arXiv:2211.11972v1 [cs.LG], 2022.

\bibitem[Grillenmeier(2021)]{Active_dir}
G.~Grillenmeier.
\newblock Now's the time to rethink active directory security.
\newblock \emph{Network Security}, 2021.

\bibitem[Haarnoja et~al.(2018)Haarnoja, Zhou, Abbeel, and Levine]{sac}
T.~Haarnoja, A.~Zhou, P.~Abbeel, and S.~Levine.
\newblock Soft actor-critic: Off-policy maximum entropy deep reinforcement learning with a stochastic actor.
\newblock In \emph{ICML}, 2018.

\bibitem[Ho and Ermon(2016)]{gail}
J.~Ho and S.~Ermon.
\newblock Generative adversarial imitation learning.
\newblock In \emph{NeurIPS}, 2016.

\bibitem[Keren et~al.(2016)Keren, Gal, and Karpas]{10.5555/3061053.3061065}
S.~Keren, A.~Gal, and E.~Karpas.
\newblock Privacy preserving plans in partially observable environments.
\newblock In \emph{IJCAI}, 2016.

\bibitem[Kulkarni et~al.(2018)Kulkarni, Klenk, Rane, and Soroush]{kulkarni2018resource}
A.~Kulkarni, M.~Klenk, S.~Rane, and H.~Soroush.
\newblock Resource bounded secure goal obfuscation.
\newblock In \emph{AAAI Fall Symposium on Integrating Planning, Diagnosis and Causal Reasoning}, 2018.

\bibitem[Lewis and Miller(2023)]{deceptiverl2}
A.~Lewis and T.~Miller.
\newblock Deceptive reinforcement learning in model-free domains.
\newblock In \emph{Proceedings of the Thirty-Third International Conference on Automated Planning and Scheduling}, 2023.

\bibitem[Liu et~al.(2021)Liu, Yang, Miller, and Masters]{deceptiverl}
Z.~Liu, Y.~Yang, T.~Miller, and P.~Masters.
\newblock Deceptive reinforcement learning for privacy-preserving planning.
\newblock In \emph{AAMAS}, 2021.

\bibitem[Masters and Sardi{\~n}a(2017)]{Masters2017DeceptiveP}
P.~Masters and S.~Sardi{\~n}a.
\newblock Deceptive path-planning.
\newblock In \emph{IJCAI}, 2017.

\bibitem[Ng and Russell(2000)]{ng_irl}
A.~Y. Ng and S.~J. Russell.
\newblock Algorithms for inverse reinforcement learning.
\newblock In \emph{ICML}, 2000.

\bibitem[Ornik and Topcu(2018)]{topcu}
M.~Ornik and U.~Topcu.
\newblock Deception in optimal control.
\newblock In \emph{2018 56th Annual Allerton Conference on Communication, Control, and Computing (Allerton)}, 2018.

\bibitem[Paruchuri et~al.(2006)Paruchuri, Tambe, Ord\'{o}\~{n}ez, and Kraus]{maxent}
P.~Paruchuri, M.~Tambe, F.~Ord\'{o}\~{n}ez, and S.~Kraus.
\newblock Security in multiagent systems by policy randomization.
\newblock In \emph{AAMAS}, 2006.

\bibitem[Prakash et~al.(2022)Prakash, Husain, Paruchuri, and Gujar]{Prakash_Husain_Paruchuri_Gujar_2022}
K.~Prakash, F.~Husain, P.~Paruchuri, and S.~Gujar.
\newblock How private is your rl policy? an inverse rl based analysis framework.
\newblock In \emph{AAAI}, 2022.

\bibitem[Raffin et~al.(2021)Raffin, Hill, Gleave, Kanervisto, Ernestus, and Dormann]{stable-baselines3}
A.~Raffin, A.~Hill, A.~Gleave, A.~Kanervisto, M.~Ernestus, and N.~Dormann.
\newblock Stable-baselines3: Reliable reinforcement learning implementations.
\newblock \emph{Journal of Machine Learning Research}, 2021.

\bibitem[Ramachandran and Amir(2007)]{be}
D.~Ramachandran and E.~Amir.
\newblock Bayesian inverse reinforcement learning.
\newblock In \emph{IJCAI}, 2007.

\bibitem[Seifert et~al.(2021)Seifert, Betser, Blum, Bono, Farris, Goren, Grana, Holsheimer, Marken, Neil, Nichols, Parikh, and Wei]{msft:cyberbattlesim}
C.~Seifert, M.~Betser, W.~Blum, J.~Bono, K.~Farris, E.~Goren, J.~Grana, K.~Holsheimer, B.~Marken, J.~Neil, N.~Nichols, J.~Parikh, and H.~Wei.
\newblock Cyberbattlesim.
\newblock \url{https://github.com/microsoft/cyberbattlesim}, 2021.

\bibitem[Sengupta et~al.(2020)Sengupta, Chowdhary, Sabur, Alshamrani, Huang, and Kambhampati]{mtd}
S.~Sengupta, A.~Chowdhary, A.~Sabur, A.~Alshamrani, D.~Huang, and S.~Kambhampati.
\newblock A survey of moving target defenses for network security.
\newblock \emph{IEEE Communications Surveys and Tutorials}, 2020.

\bibitem[Skalse and Abate(2023)]{skalse2023misspecification}
J.~Skalse and A.~Abate.
\newblock Misspecification in inverse reinforcement learning.
\newblock In \emph{AAAI}, 2023.

\bibitem[Syed et~al.(2008)Syed, Bowling, and Schapire]{10.1145/1390156.1390286}
U.~Syed, M.~Bowling, and R.~E. Schapire.
\newblock Apprenticeship learning using linear programming.
\newblock In \emph{ICML}, 2008.

\bibitem[Towers et~al.(2023)Towers, Terry, Kwiatkowski, Balis, Cola, Deleu, Goulão, Kallinteris, KG, Krimmel, Perez-Vicente, Pierré, Schulhoff, Tai, Shen, and Younis]{gym}
M.~Towers, J.~K. Terry, A.~Kwiatkowski, J.~U. Balis, G.~d. Cola, T.~Deleu, M.~Goulão, A.~Kallinteris, A.~KG, M.~Krimmel, R.~Perez-Vicente, A.~Pierré, S.~Schulhoff, J.~J. Tai, A.~T.~J. Shen, and O.~G. Younis.
\newblock Gymnasium, Mar. 2023.

\bibitem[Vietri et~al.(2020)Vietri, Balle, Krishnamurthy, and Wu]{vietri2020private}
G.~Vietri, B.~Balle, A.~Krishnamurthy, and Z.~S. Wu.
\newblock Private reinforcement learning with pac and regret guarantees.
\newblock In \emph{ICML}, 2020.

\bibitem[Wang and Hegde(2019)]{dqfn}
B.~Wang and N.~Hegde.
\newblock Privacy-preserving q-learning with functional noise in continuous spaces.
\newblock In \emph{NeurIPS}, 2019.

\bibitem[Whaley(1982)]{doi:10.1080/01402398208437106}
B.~Whaley.
\newblock Toward a general theory of deception.
\newblock \emph{Journal of Strategic Studies}, 1982.

\bibitem[Zhou(2022)]{Zou}
X.~Zhou.
\newblock Differentially private reinforcement learning with linear function approximation.
\newblock \emph{CoRR}, abs/2201.07052, 2022.

\bibitem[Ziebart et~al.(2010)Ziebart, Bagnell, and Dey]{mceirl}
B.~D. Ziebart, J.~A. Bagnell, and A.~K. Dey.
\newblock Modeling interaction via the principle of maximum causal entropy.
\newblock In \emph{ICML}, 2010.

\end{thebibliography}

\appendix

\clearpage

\section{Proofs}
\subsection{Proof of Lemma \ref{lambda_lemma}}
\setcounter{lemma}{0}
\begin{lemma}
 Any policy $\bar{\pi}$ that is the solution of a Max Intentional Entropy randomization problem $\text{MEIR}(r, E_{min})$ with a reward constraint $E_{min} \in [\hat{E}, E^*]$, can be expressed as the solution of the Maximum Entropy RL problem as, 
 \begin{equation}
     \bar{\pi} = RL(\lambda^*r)
 \end{equation}
 for some $\lambda^* \geq 0$.
\end{lemma}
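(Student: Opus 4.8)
The plan is to recognize $\text{MEIR}(r,E_{min})$ as a convex program over the space of occupancy measures and then extract the claimed representation from its Lagrangian dual. Using the one-to-one correspondence between policies and occupancy measures already invoked in the Background, I would rewrite the problem with $\rho$ as the decision variable: maximize $\mathcal{H}(\rho)$ subject to the reward constraint $\sum_{s,a}\rho(s,a)r(s,a)\ge E_{min}$ together with the Bellman-flow (affine) constraints that characterize the set of valid occupancy measures. The key structural facts are that the $\gamma$-discounted causal entropy $\mathcal{H}$ is concave in $\rho$, while both the reward constraint and the flow constraints are affine; hence the feasible region is a polytope and the program is a concave maximization, i.e.\ convex.

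Next I would form the Lagrangian that dualizes only the reward constraint, $L(\rho,\lambda)=\mathcal{H}(\rho)+\lambda\big(E_{\rho}[r]-E_{min}\big)$ with multiplier $\lambda\ge 0$, keeping the flow constraints as the implicit domain. To pass from the constrained problem to this Lagrangian I need strong duality, which I would obtain from Slater's condition: since $E_{min}\in[\hat E,E^*]$ the feasible set is nonempty (the optimal deterministic policy attains $E^*$), and for $E_{min}<E^*$ a convex combination of the uniform and optimal occupancy measures is strictly feasible, so the KKT conditions are both necessary and sufficient at the optimum $(\bar\rho,\lambda^*)$.

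The result then follows from the stationarity condition. At the saddle point $\bar\rho$ maximizes $L(\cdot,\lambda^*)$ over the occupancy polytope; dropping the constant $-\lambda^* E_{min}$, this says $\bar\rho\in\text{argmax}_{\rho}\{E_{\rho}[\lambda^* r]+\mathcal{H}(\rho)\}$, which is exactly the MERL objective of Equation \ref{eq:merl} with reward $\lambda^* r$. By the policy--occupancy bijection (and strict concavity of entropy, which makes the maximizer unique) this is precisely $\bar\pi = RL(\lambda^* r)$, with $\lambda^*\ge 0$ since it is the multiplier of an inequality constraint. Complementary slackness additionally matches the two regimes described after the lemma: $E_{min}=\hat E$ admits $\lambda^*=0$ (uniform policy), while $E_{min}\uparrow E^*$ drives $\lambda^*\to\infty$ (deterministic optimal policy).

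The main obstacle I anticipate is the duality/constraint-qualification step rather than any algebra: I must justify that causal entropy is concave in the occupancy measure, so that KKT applies, and verify Slater's condition while handling the boundary cases $E_{min}=\hat E$ and $E_{min}=E^*$, where strict feasibility or finiteness of $\lambda^*$ is delicate. Once strong duality is in hand, everything else is mechanical.
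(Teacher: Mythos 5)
Your proposal follows essentially the same route as the paper's proof: reformulate MEIR over occupancy measures, dualize only the reward constraint in a Lagrangian $L(\rho,\lambda)=\bar{\mathcal{H}}(\rho)+\lambda\bigl(E_{\rho}[r]-E_{min}\bigr)$, invoke strong duality (the paper cites convexity of the feasible set, strict concavity of the entropy, and Section 5.5.5 of Boyd--Vandenberghe, where you instead verify Slater's condition explicitly), and recover the primal optimum from the dual optimum as $\bar{\pi}=RL(\lambda^* r)$. Your explicit Slater argument and attention to the boundary cases $E_{min}=\hat{E}$ and $E_{min}=E^*$ is in fact slightly more careful than the paper's justification, but it is the same argument, not a different one.
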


Before we begin the proof, we first introduce some of the properties of occupancy measures. Any valid occupancy measure $\rho$ must satisfy a set of affine constraints known as the Bellman Flow Constraints \cite{10.1145/1390156.1390286}: 
$$\rho(s,a) \geq 0, \forall (s,a) \in S\times A, \text{ and }$$ 
$$\sum_a \rho(s,a) = \mu_s + \gamma\sum_{s'}\sum_a P(s'|s,a)\rho(s',a), \forall s \in S$$ If $\Pi$ is the set of all stationary policies, then let $D = \{\rho_{\pi}: \pi \in \Pi\}$ be the set of all valid occupancy measures. There exists a bijection between and $D$ and $\Pi$ \cite{10.1145/1390156.1390286}. Hence, $\pi$ can be uniquely recovered from $\rho$ as $$\pi(a|s) = \frac{\rho(s,a)}{\sum_{a'} \rho(s,a')}$$ The cumulative $\gamma$-discounted causal entropy $\mathcal{H}(\pi)$ can also be defined using occupancy measures~\cite{gail}: $$\bar{\mathcal{H}}(\rho) \triangleq \sum_s \sum_a \rho(s,a) \log\left(\frac{\rho(s,a)}{\sum_{a'}\rho(s,a')}\right)$$   

\begin{proof}
\label{lambda_lemma_proof}
    The MEIR algorithm can be written in terms of occupancy measures as, 
\begin{align}
    \text{MEIR}(r, E_{min}) = \Hquad &\underset{\rho \in D}{\text{argmax}} \Hquad \bar{\mathcal{H}}(\rho) \Hquad \\ 
    &\text{subject to} \Hquad \sum_s \sum_a \rho(s,a) r(s,a) \geq E_{min}
\end{align} 
The resulting Lagrangian can be expressed as, 
\begin{equation*}
    L(\rho, \lambda) = \bar{\mathcal{H}}(\rho) + \lambda \left[ \sum_s \sum_a  \rho(s,a) r(s,a) - E_{min} \right]
\end{equation*}

The feasible set $D' = \{\rho: \rho \in D \text{ and } \rho^Tr \geq E_{min}\}$ is a convex set (as it is defined by a set of affine constraints) and $\bar{\mathcal{H}}$ is strictly concave \cite{gail}, hence strong duality holds, and the primal optimal 
$$\rho^* = \underset{\rho \in D}{\text{argmax }} \underset{\lambda}{\text{min }} L(\rho, \lambda)$$ can be uniquely recovered from the dual optimum $$\lambda^* = \underset{\lambda}{\text{argmin }} \underset{\rho \in D}{\text{max }} L(\rho, \lambda) \text{, as}$$  
$$\rho^* = \underset{\rho \in D}{\text{argmax }} L(\rho, \lambda^*) = RL(\lambda^*r)$$ (Section 5.5.5 in \cite{DBLP:books/cu/BV2014}). Hence, the randomized policy $\bar{\pi}$ can be obtained as,
\begin{equation}
    \bar{\pi} = RL(\lambda^*r)
\end{equation}
\end{proof}


\subsection{Proof of Lemma \ref{lemma_scale}}
\setcounter{theorem}{0}
\setcounter{lemma}{1}
\setcounter{proposition}{0}
Lemma \ref{lemma_scale} is a corollary of Theorem 3.4 in \cite{skalse2023misspecification}. Theorem 3.4 in \cite{skalse2023misspecification} defines all the functions $g$ such that if $g(r_1) = RL(r_2)$ for two reward functions $r_1, r_2$, then they share the same ordering over policies. We use this Theorem to show that $RL(\lambda \cdot r_1)$ where $\lambda \in \mathbb{R^+}$ belongs to this class of functions, i.e, if $RL(\lambda \cdot r_1) = RL(r_2)$, then $r_1$ has the same ordering of policies as $r_2$.


Prior to presenting this theorem and proving Lemma \ref{lemma_scale}, we introduce the necessary notation and definitions. 

\begin{enumerate}
    \item Let $R$ be the set of all possible reward functions $r: S\times A \rightarrow \mathbb{R}$. A reward object $f: R \rightarrow X$ is a mapping from $R$ to an arbitrary set $X$.  In this section we are only concerned with reward objects of the form $R \rightarrow \Pi$, i.e, the MERL, MEIR and MM algorithms.
    \item Given a partition $P$ of $R$, we say that a reward object, $f$ is $P$-\textit{admissible} if $f(r_1) = f(r_2) \Rightarrow r_1 \equiv_{P} r2$ 
    \item \label{def:probust}
    Given a partition $P$ of $R$, $f$ is said to be $P$-\textit{robust} to \textit{misspecification} with $g$, if $f$ is $P$-\textit{admissible}, $f \neq g$ and $f(r_1) = g(r_2) \Rightarrow r_1 \equiv_P r_2$.
\end{enumerate} 

Let $\psi: r \rightarrow \mathbb{R}^+$ represent a function for weighing the rewards. Let $c_{\psi}(r) \triangleq \text{RL}(\psi(r) \cdot r)$ denote objective \ref{eq:merl} with a weight of $\psi(r)$ given to the reward term and $C = \{c_{\psi}: \psi \in  r \rightarrow \mathbb{R}\}$ be the set of all such functions. 

\begin{proposition} (Theorem 3.4 in \cite{skalse2023misspecification})
\label{ref_prop_1}
    Consider $c_{\psi} \in C$,  then, $c_{\psi}$ is $ORD^M$-robust to misspecification with $g$ if and only if $g \in C$ and $g \neq c_{\psi}$.
\end{proposition}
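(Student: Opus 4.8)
The plan is to reduce the biconditional to a single structural fact about the fibres of the soft-optimal (MERL) operator $RL$, and then dispatch the two directions separately. Note first that the clause ``$f \neq g$'' in the definition of $ORD^M$-robustness is exactly the clause ``$g \neq c_\psi$'' in the statement, so that half is immediate and can be set aside. What remains is to show that, when $g \in C$, the object $c_\psi$ is $ORD^M$-admissible and satisfies the cross-condition $c_\psi(r_1) = g(r_2) \Rightarrow r_1 \equiv_{ORD^M} r_2$, and conversely that no object outside $C$ can be a harmless misspecification of $c_\psi$. Both admissibility and the cross-condition follow from one implication I will call the \emph{core implication}: for positive scalars $\lambda_1,\lambda_2$ and rewards $r_1,r_2$, if $RL(\lambda_1 r_1) = RL(\lambda_2 r_2)$ then $r_1 \equiv_{ORD^M} r_2$. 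Indeed, admissibility is the case $\lambda_i = \psi(r_i)$, and the cross-condition with $g = c_{\psi'}$ is the case $\lambda_1 = \psi(r_1),\ \lambda_2 = \psi'(r_2)$.

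To establish the core implication I would first describe the fibre $RL^{-1}(\pi)$ for a fixed fully supported policy $\pi$. Writing the soft Bellman equations for the objective in \eqref{eq:merl}, any optimal policy satisfies $\log\pi(a|s) = Q(s,a) - V(s)$ with $Q(s,a) = \tilde r(s,a) + \gamma\,\mathbb{E}_{s'|s,a}[V(s')]$, so every reward $\tilde r$ with $RL(\tilde r)=\pi$ has the form $\tilde r(s,a) = \log\pi(a|s) + \big(V(s) - \gamma\,\mathbb{E}_{s'|s,a}[V(s')]\big)$, and conversely any such reward reproduces $\pi$ because potential shaping leaves the soft-optimal policy unchanged. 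Hence $RL(\lambda_1 r_1) = RL(\lambda_2 r_2) = \pi$ forces $\lambda_1 r_1$ and $\lambda_2 r_2$ to differ only by a potential-shaping term. I would then invoke two order-preserving invariances: positive rescaling trivially preserves the policy ordering, and potential shaping changes $E_\pi[\tilde r]$ only by an additive constant independent of $\pi$ (the shaping term telescopes under the discounted occupancy measure). Chaining $r_1 \equiv_{ORD^M} \lambda_1 r_1 \equiv_{ORD^M} \lambda_2 r_2 \equiv_{ORD^M} r_2$ then gives the core implication, and with it the ``if'' direction.

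For the converse I would argue by contraposition. Assuming $g \notin C$, there is some $r_2$ whose image $\pi_0 := g(r_2)$ does not lie on the temperature curve $\{RL(\lambda r_2) : \lambda > 0\}$. The goal is to exhibit $r_1$ with $c_\psi(r_1) = \pi_0$ yet $r_1 \not\equiv_{ORD^M} r_2$, defeating robustness. Using surjectivity of $RL$ onto the fully supported policies, pick a reward realising $\pi_0$ (e.g. $\log\pi_0$) and solve $RL(\psi(r_1) r_1) = \pi_0$ for $r_1$; by the fibre characterisation this forces $r_1 \equiv_{ORD^M} \log\pi_0$. If we also had $r_1 \equiv_{ORD^M} r_2$, then $\log\pi_0$ would be a positive rescaling and shaping of $r_2$, whence $\pi_0 = RL(\log\pi_0)=RL(\alpha r_2)$ for some $\alpha>0$ by shaping-invariance — placing $\pi_0$ back on $r_2$'s temperature curve, a contradiction. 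Thus $r_1 \not\equiv_{ORD^M} r_2$ and robustness fails.

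The hard part will be this converse, for two reasons. First, matching the arbitrary output $g(r_2)$ by the image of $c_\psi$ requires care about whether $g$ lands in the fully supported policies $\Pi^+$ and whether $c_\psi$ covers the relevant region, since the reward-dependent weight $\psi$ couples the choice of $r_1$ to its own value and must be inverted through a fixed-point argument (solving $c = \psi(\log\pi_0/c)$). Second, one must certify that the recovered $r_1$ genuinely differs in ordering from $r_2$ rather than merely in temperature, which is precisely the rigidity of the class $ORD^M$. These are exactly the transformation-group arguments carried out in Theorem~3.4 of \cite{skalse2023misspecification}; the work specific to our setting is only to verify that $c_\psi$ satisfies their hypotheses, so that their general characterisation can be imported and specialised to the $ORD^M$ partition.
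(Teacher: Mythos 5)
A point of context first: the paper never proves Proposition \ref{ref_prop_1}; it is imported from \cite{skalse2023misspecification}, and the appendix explicitly directs the reader to Section A.3 of that work for the proof. So your attempt can only be judged on its own merits. On those merits, your ``if'' direction is essentially sound, and it is in fact the more useful half: the fibre characterization of the MERL operator (every $\tilde r$ with $RL(\tilde r)=\pi$ equals $\log\pi$ plus a potential-shaping term $V(s)-\gamma E_{s'|s,a}[V(s')]$), combined with the facts that positive rescaling and potential shaping both preserve the ordering over policies, yields your ``core implication''. That implication is precisely Lemma \ref{lemma_scale}, which is the only consequence of Proposition \ref{ref_prop_1} that the paper ever uses; your argument proves it directly, bypassing the biconditional entirely.

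The converse direction, however, contains a genuine gap. Your contradiction requires exhibiting an $r_1$ with $c_\psi(r_1)=g(r_2)=\pi_0$, but nothing guarantees $\pi_0\in\mathrm{Im}(c_\psi)$: $\pi_0$ could be deterministic (every output of $c_\psi$ is fully supported), or, as you yourself note, the fixed-point equation $c=\psi(\log\pi_0/c)$ may have no solution because $\psi$ is an arbitrary positive function. Whenever $\pi_0\notin\mathrm{Im}(c_\psi)$, the cross-condition holds vacuously for that $r_2$ and no contradiction materializes. Worse, with the definition of robustness as restated in this paper (admissibility, $f\neq g$, and the cross-implication only), the ``only if'' direction is simply false: let $g$ map every reward to one fixed deterministic policy; then $c_\psi$ is admissible, $c_\psi\neq g$, and the cross-condition is vacuous, so $c_\psi$ is robust to misspecification with $g$ although $g\notin C$. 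The theorem holds in \cite{skalse2023misspecification} only because their definition additionally demands $\mathrm{Im}(g)\subseteq\mathrm{Im}(f)$, a clause that the present paper's restatement --- and your proof --- drops; restoring it is exactly what blocks the counterexample and what licenses your ``pick a reward realising $\pi_0$'' step. Finally, two of your moves undercut the attempt as a standalone proof: deferring the ``transformation-group arguments'' to Theorem 3.4 of \cite{skalse2023misspecification} is circular when that theorem is the statement being proved, and your converse silently invokes the characterization that $r_1\equiv_{ORD^M}r_2$ exactly when the rewards differ by positive scaling plus potential shaping, which is itself a nontrivial result that needs proof or citation.
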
 

We would kindly direct the reader to \cite{skalse2023misspecification} (Section A.3) for the proof of Proposition \ref{ref_prop_1}.

\begin{lemma} (Corollary of Proposition \ref{ref_prop_1})
 For any two scalars, $\lambda_1, \lambda_2 \in \mathbb{R}^+$ and two reward functions $r_1, r_2$, if $RL(\lambda_1 r_1) = RL(\lambda_2 r_2)$, then $r_1 \equiv_{ORD^M} r_2$.
\end{lemma}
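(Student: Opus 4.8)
The plan is to recognize that both sides of the hypothesis are instances of the family $C$ evaluated at \emph{constant} weighting functions, and then to read the conclusion off directly from Proposition \ref{ref_prop_1}. First I would take the two constant weighting functions $\psi_1 \equiv \lambda_1$ and $\psi_2 \equiv \lambda_2$; since $\lambda_1, \lambda_2 \in \mathbb{R}^+$ these are legitimate elements of the domain of weighting functions, so $c_{\psi_1}, c_{\psi_2} \in C$ with $c_{\psi_1}(r) = RL(\lambda_1 r)$ and $c_{\psi_2}(r) = RL(\lambda_2 r)$. The hypothesis $RL(\lambda_1 r_1) = RL(\lambda_2 r_2)$ then reads exactly as $c_{\psi_1}(r_1) = c_{\psi_2}(r_2)$, which reduces the whole lemma to extracting an implication of this form from the cited result.

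Next, assuming first that $c_{\psi_1} \neq c_{\psi_2}$ as functions on $R$, I would apply Proposition \ref{ref_prop_1} with $f = c_{\psi_1}$ and $g = c_{\psi_2}$: because $c_{\psi_2} \in C$ and $c_{\psi_2} \neq c_{\psi_1}$, the proposition guarantees that $c_{\psi_1}$ is $ORD^M$-robust to misspecification with $c_{\psi_2}$. Unfolding the definition of robustness, its operative clause is precisely $c_{\psi_1}(r_1) = c_{\psi_2}(r_2) \Rightarrow r_1 \equiv_{ORD^M} r_2$; combined with the identification from the previous paragraph, this yields $r_1 \equiv_{ORD^M} r_2$, as required.

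The one situation this does not cover is when $c_{\psi_1}$ and $c_{\psi_2}$ coincide as functions (possible only if the two scalings induce identical MERL policies for every reward), since robustness is not defined for a pair $f = g$. Here I would instead invoke the $ORD^M$-\emph{admissibility} of $c_{\psi_1}$: admissibility is the first clause of the definition of robustness, and it is guaranteed by applying Proposition \ref{ref_prop_1} to $c_{\psi_1}$ paired with \emph{any} other distinct element of $C$, for instance the constant weight $2\lambda_1$. Admissibility states $c_{\psi_1}(r_1) = c_{\psi_1}(r_2) \Rightarrow r_1 \equiv_{ORD^M} r_2$, which again closes the argument since the hypothesis now reads $c_{\psi_1}(r_1) = c_{\psi_1}(r_2)$.

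I expect the main obstacle to be exactly this bookkeeping around the degenerate case $c_{\psi_1} = c_{\psi_2}$: the definition of robustness explicitly requires $f \neq g$, so one cannot feed the proposition a single function to obtain the plain admissibility implication, and one must separately exhibit a genuinely distinct second member of $C$ from which admissibility can be extracted. Everything else is a direct substitution into the cited theorem, so the entire conceptual content of the lemma is carried by Proposition \ref{ref_prop_1} together with the observation that constant scalings are valid weighting functions.
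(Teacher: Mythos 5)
Your proof is correct and follows essentially the same route as the paper's: both instantiate Proposition \ref{ref_prop_1} at the constant weightings $\psi_1 \equiv \lambda_1$, $\psi_2 \equiv \lambda_2$, invoke $ORD^M$-robustness to misspecification when the two reward objects differ, and fall back on $ORD^M$-admissibility in the degenerate case. If anything, your version is slightly more careful than the paper's, which splits on $\psi_1 \neq \psi_2$ rather than on $c_{\psi_1} \neq c_{\psi_2}$ (the condition the proposition actually requires) and leaves implicit the need to exhibit a distinct second element of $C$ from which admissibility is extracted.
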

\begin{proof}
\label{lemma_scale_proof}
$RL(\lambda_1 r_1)$ and $RL(\lambda_2 r_2)$ can be written as $c_{\psi_1}$ and $c_{\psi_2}$ respectively where $\psi_1(r) = \lambda_1$ and $\psi_2(r) = \lambda_2 \Hquad \forall r \in R$. We break the proof down into two cases:

\textit{Case 1: $\psi_1 \neq \psi_2$} \\Proposition \ref{ref_prop_1} states $\psi_1$ is $ORD^M$-\textit{robust} to \textit{misspecification} with $\psi_2$, hence, from the definition $P$-\textit{robustness} we get  $\psi_1(r_1) = \psi_2(r_2) \Rightarrow r_1 \equiv_{ORD^M} r_2$.
 
\textit{Case 2: $\psi_1 = \psi_2$} \\
For $c_{\psi}$ to be $ORD^M$-\textit{robust} to \textit{misspecification}, it must be $ORD^M$-\textit{admissible} (Definition \ref{def:probust}). From the definition of $P$-\textit{admissibility}, we get $\psi_1(r_1) = \psi_1(r_2) \Rightarrow r_1 \equiv_{ORD^M} r_2$.

\end{proof}

\section{Linear Program Formulation of the Max Misinformation Algorithm}
\label{eq:mm_lp}
The Max Misinformation algorithm can be written as a linear program in terms of its occupancy measures,
\begin{equation}
    \text{MM}(R, E_{min}) = \Hquad \underset{\rho}{\text{argmax}} \Hquad \sum_s \sum_a \rho(s,a) r^-(s,a) 
\end{equation}
subject to, 
\begin{align*}
    \rho(s,a) \geq 0 \Hquad \forall s \in S, a \in A \\
    \sum_a \rho(s,a) = \mu_s + \gamma\sum_{s'}\sum_aP(s'|s,a)\rho(s',a) \Hquad \forall s \in S, a \in A \\
    \sum_s \sum_a \rho(s,a) r(s,a) \geq E_{min}
\end{align*}

\section{Solving the MM formulation}

\subsection{Binary Search solution for MM}
\label{app:binary_search}
Let $\bar{E}$ be the expected reward obtained by a policy $\bar{\pi}$ maximizing the objective in Line \ref{conv_line} of Algorithm \ref{alg:mm_pd_descent} for an arbitrary $\lambda \geq 0$. It should be noted that as $\lambda$ increases (decreases), $\bar{E}$ increases (decreases). Due to this property, we can search for the optimal $\lambda^*$ via binary search, which drastically improves runtime. This is presented in Algorithm \ref{alg:mm_binary_search}.

\begin{algorithm}
\caption{Max Misinformation via Binary Search}\label{alg:mm_binary_search}
\begin{algorithmic}[1]
\State \textbf{Input:}  Anti-reward $r^{-}$, $E_{min} \in [E^-, E^*]$, $\lambda_{max}$, $\epsilon$
\State Initialize temperature parameter $i = 0, j = \lambda_{max}$
\While{$|\bar{E} - E_{min}| < \epsilon$}
    \State$ \lambda = (i+j)/2$
     \State \label{conv_line2}$\bar{\pi}$ = max $E_{\pi}[\lambda r(s,a) + r^-(s,a)]$ 
     \State $\bar{E} = E_{\bar{\pi}}[r(s,a)]$
     \If{$\bar{E} < E_{min}$} $i = \lambda$
     \Else \text{  } $j = \lambda$ 
     \EndIf
\EndWhile

\end{algorithmic}
\end{algorithm}

\subsection{Solving the MEIR formulation}
The original formulation of the MEIR algorithm, as presented in \cite{maxent}, relies on a model-based approach limited to discrete environments. To transform the MEIR algorithm into a model-free version, one can employ primal-dual descent or binary search techniques by replacing the contents of Lines \ref{conv_line} and \ref{conv_line2} in Algorithm \ref{alg:mm_pd_descent} and \ref{alg:mm_binary_search} with $\bar{\pi} = E_{\pi}[\lambda_i r(s,a) - \log(\pi(a|s))]$. The Soft-Q learning algorithm described in \cite{mceirl} can be used to solve this expectation. In situations involving continuous state spaces, extensions like Soft Actor-Critic \cite{sac} offer practical solutions. We use this method to obtain policies generated by MEIR in the experiments.

\section{Generating anti-reward functions}
\subsection{Closed form solutions for $f$-divergence between two distributions.}

Different $f$-divergences and their closed-form solutions (for equation \ref{eq:f_div}) are present in Table \ref{table:closed_f_div}. This is adapted from the closed-form solutions of  $f$-divergences presented in \cite{iql}. 

\subsection{More on the use of IPMs}
In our experiments, we only use one type of IPM namely the Wasserstein-$1$ distance, which is obtained by constraining the family of functions $\mathcal{F}$ to be $1$-Lipschitz continuous. 

\label{app:KL}
\subsection{Idea behind Algorithm \ref{alg:gen_anti_reward}}
\label{app:idea_anti}
The concept behind algorithm \ref{alg:gen_anti_reward} becomes clearer through an example. Suppose we take the distribution $o$ to represent the occupancy measure. Initially, we can establish $\rho^-$ as the occupancy measure corresponding to a purely random policy $\hat{\pi}$. Consequently, the anti-reward function derived in Line 4 of Algorithm \ref{alg:gen_anti_reward} would attribute a low reward to state-action pairs visited by $\pi^*$, while granting higher rewards to other state-action pairs visited by $\hat{\pi}$. To exemplify this, if we employ KL divergence to quantify the divergence between $\rho^-$ and $\rho^*$, the anti-reward would adopt the structure of $\frac{\rho^-}{\rho^*}$. 

In the subsequent stage (Line \ref{div_line}), $\rho^{-}$ is redefined as the occupancy measure of the optimal policy of the newly computed anti-reward. In this context, state-action pairs assigned higher rewards are visited more frequently, while those receiving lower rewards (and thus frequented extensively by the optimal policy) are deliberately avoided.

\subsection{Example of an anti-reward}
\begin{center}
\begin{tikzpicture}[scale=0.2]
\tikzstyle{every node}+=[inner sep=0pt]
\draw [black] (21.7,-27.7) circle (3);
\draw (21.7,-27.7) node {$S_0$};
\draw [black] (38.1,-20.6) circle (3);
\draw (38.1,-20.6) node {$S_1$};
\draw [black] (38.1,-20.6) circle (2.4);
\draw [black] (38.1,-27.7) circle (3);
\draw (38.1,-27.7) node {$S_2$};
\draw [black] (38.1,-27.7) circle (2.4);
\draw [black] (38.1,-34.8) circle (3);
\draw (38.1,-34.8) node {$S_3$};
\draw [black] (38.1,-34.8) circle (2.4);
\draw [black] (23.122,-25.066) arc (144.82755:81.99062:12.631);
\fill [black] (35.21,-19.83) -- (34.48,-19.23) -- (34.34,-20.22);
\draw (27.22,-20.24) node [above] {$a_1$};
\draw [black] (24.7,-27.7) -- (35.1,-27.7);
\fill [black] (35.1,-27.7) -- (34.3,-27.2) -- (34.3,-28.2);
\draw (29.9,-28.2) node [below] {$a_2$};
\draw [black] (35.127,-35.167) arc (-88.63145:-138.18671:15.16);
\fill [black] (35.13,-35.17) -- (34.32,-34.69) -- (34.34,-35.69);
\draw (27.53,-34.44) node [below] {$a_3$};
\end{tikzpicture}

\end{center}

Consider the presented simple MDP, featuring an initial state $S_0$ and three terminal states $S_1, S_2, S_3$. In this context, if the original reward function is of the form $r = \left[1, 2, 3\right]$ for states $S_1, S_2$, and $S_3$ correspondingly, and the optimal policy $\pi^* = \left[0.01, 0.12, 0.87\right]$ emerges as the result of solving MERL with a minor weight attributed to the entropy term. The $\tau$-based anti-reward defined as $r^- = -\log(\pi^*(a|s))$, yields $r^- = \left[4.15, 2.15, 0.15\right]$.

\subsection{Comparison with MEIR}
\label{app:comp_meir}
Table \ref{table:comp_meir_app} presents a quantitative comparison between the MM and MEIR algorithms against an MCE-IRL based adversary with the true occupancy measure of the agent. Table \ref{table:comp_meir_app} highlights the quality of the recovered reward in terms of Pearson Correlation and EPIC distance \cite{epic} from the original reward, which are standard metrics for reward functions \cite{epic}. It can be inferred from Table \ref{table:comp_meir_app} that the MM algorithm consistently outperforms the MEIR algorithm across all the tested environments.

\subsection{Benchmarking Anti-Reward functions}
\label{app:bench}
The MM algorithm allows for the design of a wide-variety of anti-reward functions. Table \ref{table:comparing_occ} contains a comparison of the performance of a few of these anti-rewards against an MCE-IRL based observer with the true occupancy measures of the agent. Upon reviewing Table \ref{table:comparing_occ}, it becomes apparent that no singular anti-reward consistently results in the best performance. We believe this is because IRL algorithms fail to account for potentially misleading observations, rendering them poor observers. Consequently, we abstain from advocating for any single divergence as the definitive choice.

\section{Action predictability}
\label{boltzman_exp}
Unlike the MEIR algorithm that results in \textit{stochastic} policies, the MM algorithm results in \textit{deterministic} policies as the algorithm optimizes a scalar reward function (Equation \ref{eq:mm_dual}). This makes action predictability easier for the observer, which could pose a security risk.

We present two methods to deal with this: The first is based on the agent using a mixed policy of multiple solutions of MM and the second is based on sampling of actions according to the Boltzmann distribution \cite{be}.

\subsection{Mixed Policy method}
\begin{figure*}
\includegraphics[width=2\columnwidth]{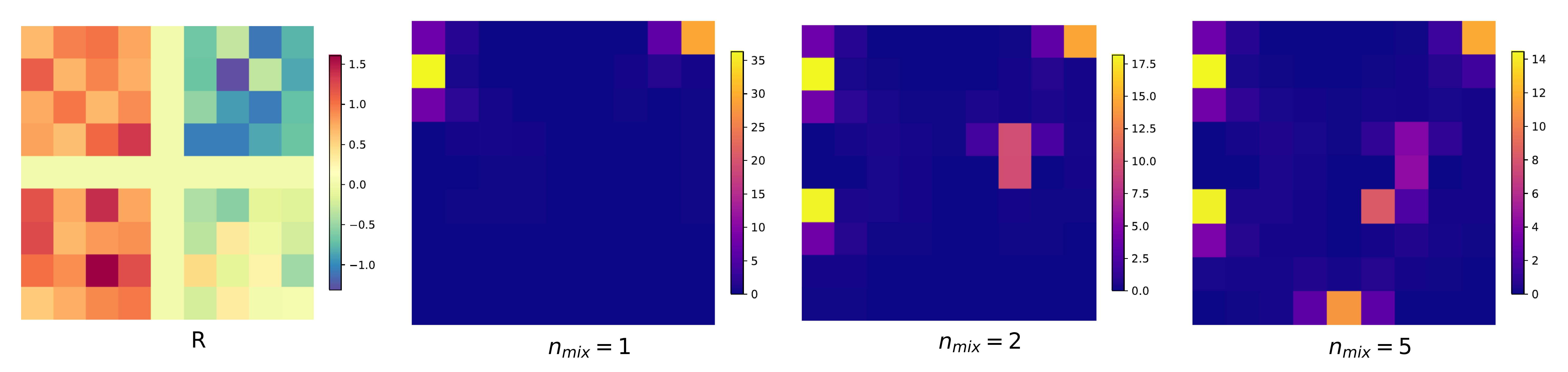}
\caption{Mixing multiple solutions of the MM algorithm results in the agent visiting a more diverse set of misleading states.}
\label{fig:comp_modes}
\end{figure*}

It is important to note that there can be multiple solutions when generating an anti-reward using Algorithm \ref{alg:gen_anti_reward}. This is because, there are multiple states/trajectories that are \textit{far away} from the optimal states/trajectories, and each solution is an anti-reward that prefers just one of the regions, as can be seen in Figure \ref{fig:comp_modes}. To overcome this, we can generate multiple anti-rewards $r^-_i$ (by setting a different seed when initializing the neural network) and for each of them generate a deceptive policy $\bar{\pi}_i$ with the same $E_{min}$ value. The agent can then use a mixed policy $\bar{\pi}^{mix} = \{\{\bar{\pi_i}\}, w\}$ where $w$ can be a uniform distribution or manually chosen by the user based on their preference of each policy. We call this method $\text{MM}^{mix}$. Note that $\bar{\pi}^{mix}$ satisfies a reward threshold of $E_{min}$ as each of its constituent policies $\bar{\pi_i}$ satisfy the reward constraint.

\begin{proposition}
If a set of policies $\{\pi_i\}$ satisfy a constraint on the expected reward, i.e, $E_{\pi_{i}}[r] \geq E_{min}$ $\forall i$, then the mixed policy, $\pi^{mix} = \{\{\pi_i\}, w\}$ satisfies the constraint on expected reward, i.e, $E_{\pi_{mix}}[r] \geq E_{min}$.
\end{proposition}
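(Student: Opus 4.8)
The plan is to show that the expected reward of the mixed policy is a convex combination of the expected rewards of its constituent policies, and then apply the per-policy constraint termwise. Recall that a mixed policy $\pi^{mix} = \{\{\pi_i\}, w\}$ operates by sampling a single index $i$ according to the distribution $w$ \emph{before} the episode begins and then executing $\pi_i$ for the entire trajectory. Since the choice of policy is made once at the outset and is fixed throughout the rollout, the return of the mixed policy decomposes cleanly by conditioning on which policy was drawn.

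First I would apply the law of total expectation, conditioning on the sampled index $i$, to write
\begin{equation*}
    E_{\pi^{mix}}[r] = \sum_i w_i \, E_{\pi_i}[r].
\end{equation*}
This is the crux of the argument: it uses the fact that, conditioned on having sampled $\pi_i$, the trajectory distribution of the mixed policy coincides exactly with that of $\pi_i$, so the conditional expected return is precisely $E_{\pi_i}[r]$.

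Next I would apply the hypothesis $E_{\pi_i}[r] \geq E_{min}$ to each term of the sum and use the fact that $w$ is a probability distribution, so that $w_i \geq 0$ and $\sum_i w_i = 1$. This yields
\begin{equation*}
    E_{\pi^{mix}}[r] = \sum_i w_i \, E_{\pi_i}[r] \geq \sum_i w_i \, E_{min} = E_{min},
\end{equation*}
which is the desired conclusion.

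The only subtle point, and hence the main obstacle, is justifying the decomposition in the first step rigorously: one must be careful that mixing occurs at the \emph{trajectory} level (a single policy fixed for the whole episode) rather than at the per-timestep level, since only the former yields an exact convex combination of the constituent returns. Once this is established directly from the definition of a mixed policy, the remainder is an immediate termwise application of the constraint together with $\sum_i w_i = 1$.
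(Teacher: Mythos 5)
Your proof is correct, and it takes a mildly but genuinely different route from the paper's. The paper argues in occupancy-measure space: it invokes the fact (cited from the apprenticeship-learning literature) that the occupancy measure of a trajectory-level mixture is the convex combination $x_{mix}(s,a) = \sum_i w_i x_i(s,a)$, expresses the expected reward as $\sum_s \sum_a x_{mix}(s,a)\, r(s,a)$, and then applies the per-policy constraint termwise together with $\sum_i w_i = 1$, exactly as you do. You instead obtain the identity $E_{\pi^{mix}}[r] = \sum_i w_i\, E_{\pi_i}[r]$ directly via the law of total expectation, conditioning on the index drawn before the episode. These are the same underlying fact — the cited occupancy-measure identity is itself proved by precisely this conditioning argument — but your version is more elementary and self-contained: it needs no occupancy-measure machinery and no external citation, only the definition of a mixed policy and linearity of expectation. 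You also correctly isolate the one genuine subtlety, namely that the mixing must happen once per episode rather than per time step; a per-step mixture defines a different (stationary) policy whose expected return is not in general a convex combination of the $E_{\pi_i}[r]$, so the decomposition would fail. What the paper's route buys in exchange is uniformity with the rest of its development, where policies are handled throughout via occupancy measures and the linearity of the reward constraint in $\rho$ is the device that makes the MEIR and MM formulations tractable; citing the mixture identity in that language keeps the proposition consistent with that framework.
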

\begin{proof}
Let $x_{i}$ be the occupancy measure of $\pi_i$, then, the occupancy measure of the mixed policy $x_{mix}(s,a) = \sum_i w_i x_i(s,a)$ \cite{10.1145/1390156.1390286}, 
\begin{align*}
    \sum_s \sum_a x_i (s,a) r(s,a)  &\geq E_{min} \quad \forall i \\
    \sum_i w_i \sum_s \sum_a x_i (s,a) r(s,a) &\geq \sum_i w_i E_{min} \\
    \sum_s \sum_a (\sum_i w_i x_i (s,a)) r(s,a) &\geq (\sum_i w_i) E_{min} \\
    \sum_s \sum_a x_{mix} (s,a) r(s,a) &\geq E_{min}
\end{align*}
\end{proof}

\subsection{Boltzmann Randomization}

We can make the policy more stochastic by sampling actions according to the Boltzmann distribution \cite{be} during inference. The Boltzmann distribution is defined as, 
\begin{equation}
    BE(r, \beta) = \bar{\pi}_{BE}(a|s) = \frac{e^{\frac{1}{\beta} Q^* (s,a)} } {\sum_{a' \in A} e^{\frac{1}{\beta} Q^* (s,a')}}
\end{equation}
where $Q^*$ is the optimal $Q$-function and $\beta$ is a temperature parameter used to control the stochasticity of the policy distribution. Based on this, we introduce the Max Misinformation with Boltzmann Exploration algorithm as, 
\begin{equation}
    MMBE(r, E_{min}, \beta) = BE(\bar{r}, \beta) \circ MM(r, E_{min}) 
\end{equation}
where, $\bar{r} = \lambda^*r + r^-$. That is, we first run the MM algorithm and then employ BE on top of the Q function (obtained using MM algorithm) to obtain a randomized policy. Most value methods, learn a Q-function when solving the objective in Lines \ref{conv_line} in \ref{conv_line2} in Algorithm \ref{alg:mm_pd_descent}. In case a Q-function is not available, it can be estimated using Monte Carlo rollouts. However, randomizing the policy in this way does not guarantee that the reward constraint is respected leading to a tradeoff between increasing the stochasticity of the policy and obeying the reward constraint. 

\subsubsection{Experiments on reducing action predictability}

\begin{figure*}
\centering
\includegraphics[width=0.7\textwidth]{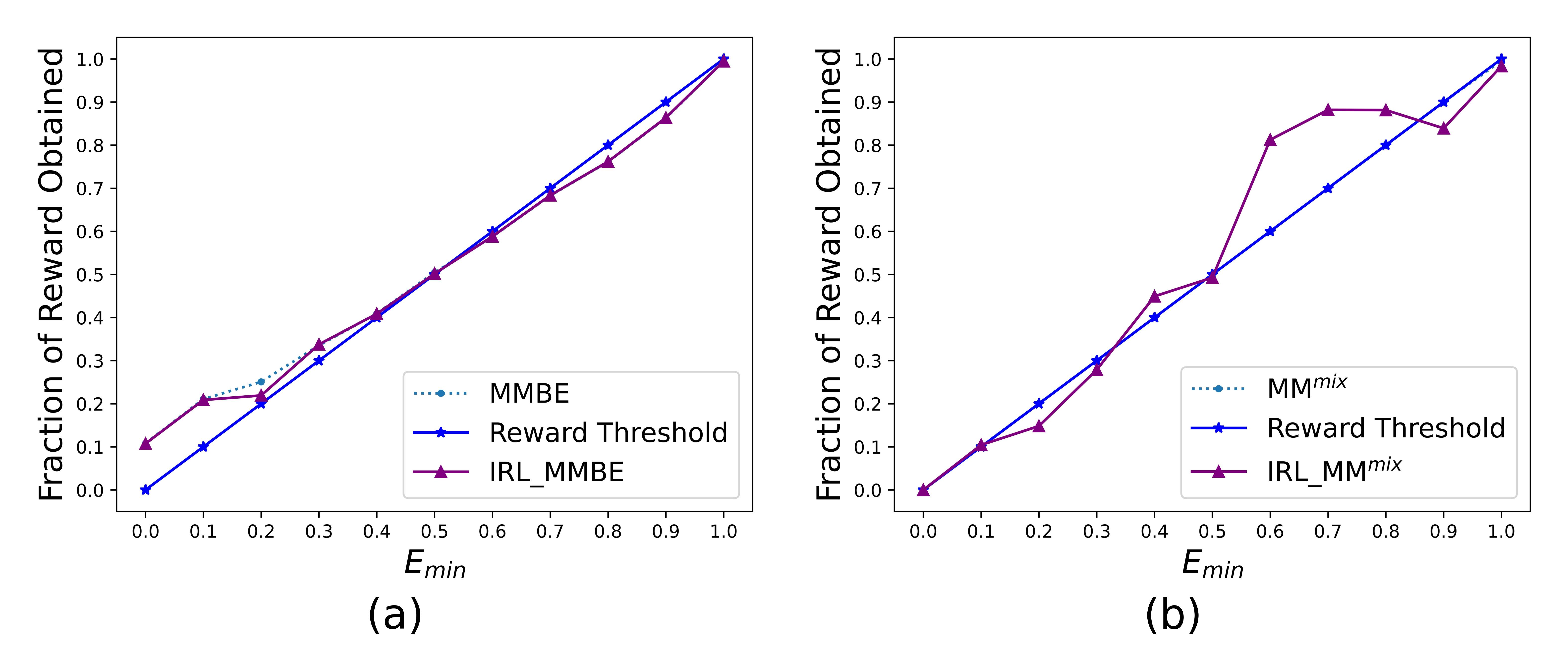}
\caption{Comparison of (a) MMBE with $\beta=0.1$ and (b) Mixed policy containing $5$ MM policies.}
\label{fig:mmbe}
\end{figure*}

\begin{figure}
    \centering
    \includegraphics[scale=0.5]{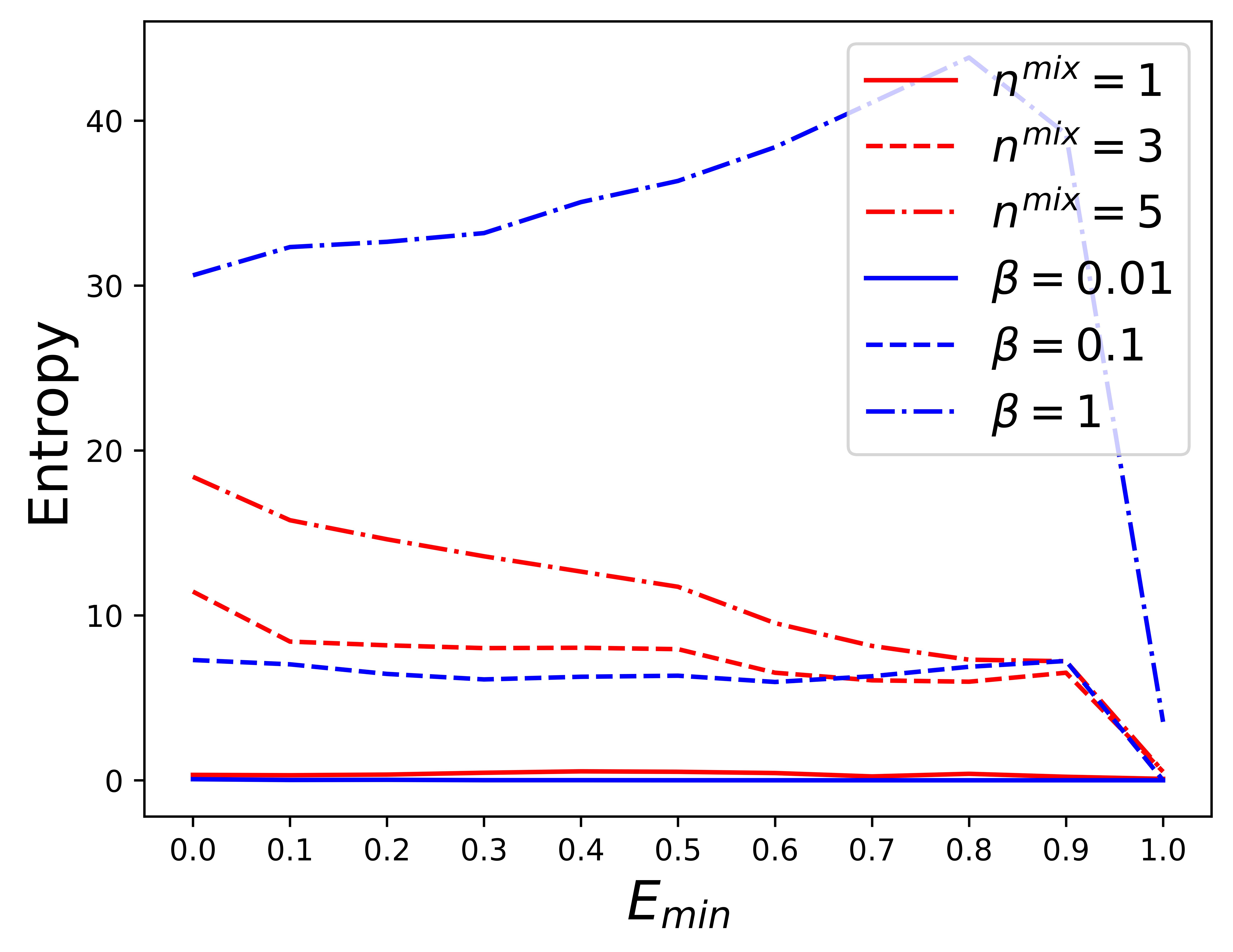}
    \caption{Comparison of the entropy of policies with varying $\beta$ and $n^{mix}$ values generated by MMBE and $\text{MM}^{mix}$ in the Four Rooms environment.}
    \label{fig:comp_ent}
\end{figure}

Figure \ref{fig:mmbe} illustrates that both the MMBE and $\text{MM}^{mix}$ algorithms achieve privacy performance comparable to the MM algorithm. However, the MMBE algorithm exhibits a slight deviation from the reward constraint, as indicated in the previous section.

In scenarios where satisfying the reward constraint is crucial, the $\text{MM}^{mix}$ algorithm proves more suitable as it is guaranteed to meet the reward constraint. However, it is worth noting that the computational complexity of $\text{MM}^{mix}$ is higher, requiring the execution of the $MM$ algorithm $n^{mix}$ times, in contrast to a single run in the case of the MMBE algorithm. Therefore, MMBE is preferable in scenarios with limited computational resources and where minor deviations from the reward constraint are acceptable.

\section{Environments}
\label{app:envs}

We use the following environments in our experiments. 

\subsection{Cybersecurity}

The Cybersecurity environment is based on the Moving Target Defense problem \cite{mtd} where a network administrator dynamically switches between network configurations to make it difficult for an attacker to infiltrate a network and compromise its nodes. We consider Active Directory networks \cite{Active_dir} created using the CyberBattle \cite{msft:cyberbattlesim} library, which generates these networks based on models of real-world networks. 

Each network configuration $n$ is defined by a set of vulnerabilities and active connections between nodes. Let $\mathcal{N} = \{n\}$ be the set of all network configurations available to the agent. Each configuration possesses an intrinsic security value $v$, indicative of the ease with which an attacker can compromise nodes post-infiltration. To compute these values, we leverage an attacker from the CyberBattle library, developed through transfer learning on numerous active directory networks. This attacker exhibits the ability to generalize attacks across diverse network configurations and sizes. Consequently, for each configuration $n \in \mathcal{N}$, we emulate an attack scenario using the aforementioned agent and assign a value inversely proportional to the compromised node count resulting from the attack. Additionally, a minor weight is assigned to the count of active connections. This ensures that the agent avoids selecting configurations without any active connections, which hold no practical value.

The state space of the MDP is defined by the tuple $(n, p)$, where $n$ is a network configuration and $p$ is its protection level. The protection level $p$ is used to account for the uncertainty in the security value of a network configuration in real-time since an arbitrary number/type of attackers can try to compromise the network that have not been accounted for. In each state, the agent can take an action to switch to any other configuration but will have an unknown protection level. Hence, the size of the action space $|A| = |S|$. Consider a state $(n_i, p_1)$, let $a_{n_j}$ indicate an action to switch to network configuration $n_j$. Then the transition matrix $P((n_i, p_1), a_{n_j}, (n_l, p_2)) = 0$ if $l \neq j$ and $\geq 0$ otherwise. In our experiments, we consider, two levels of protection, i.e, $|p| = 2$. The reward for each state $r\left((n, p)\right) = v$ if $p=1$ and $r\left((n, p)\right) = v - \delta$ if $p=0$. Each episode begins in an arbitrary configuration.

The CyberBattle library offers 10 preset Active Directory networks. For each network, we generate 25 different configurations. 

\subsection{Four Rooms}
The Four Rooms environment is a part of the Minigrid \cite{minigrid} set of environments. This environment consists of 4 rooms separated by walls as shown in Figure \ref{fig:envs}. Each wall has a door that is randomly placed on the wall separating the two rooms that allows the agent to traverse between the rooms. Each room $i$ is given a security value $S_{i}$ and a variance value $\sigma_{i}$, and each cell $j$ in room $i$ is given a value $v_{ij} \sim \mathcal{N}(S_{i}, \sigma_{i})$. Hence, the value of a cell depends on the security value assigned to its room. The agent starts the episode in any of the four rooms with equal probability and receives a reward $r(s_{ij}) = v_{ij}$ after visiting cell $(i,j)$. 

\subsection{Frozen Lake}
The Frozen Lake environment is part of the Gymnasium Library \cite{gym}. This environment is represented as a 2D grid as shown in Figure \ref{fig:envs}. The ice is slippery which introduces transition uncertainty and filled with holes. Each hole is assigned a reward of $-1$, and is absorbing, i.e, the agent is stuck in the hole once it enters and receives a reward of $-1$ every time step till the end of the episode. Every other cell is assigned a reward randomly sampled from the range $[0,1]$ which the agent receives after visiting that state. The agent starts each episode at one of the cells that do not contain a hole.

The utilization of the Four Rooms and Frozen Lake environments is rooted in addressing real-world challenges such as police patrolling (Chen, 2013) and Green Security Games (GSG) scenarios \cite{GSG_1}. These scenarios involve discretizing the patrol region into 2D grid cells, where each cell is assigned a value reflective of factors like crime or animal density—information we aim to keep confidential. Also, the Four Rooms environment includes physical barriers that mimic real-world obstacles. The Frozen Lake environment poses an additional challenge, as agents are restricted from taking highly sub-optimal paths to deceive observers; they must avoid landing in holes to maintain the reward constraint.

\subsection{Random MDP}
These environments, introduced in \cite{maxent}, were used to evaluate the MEIR algorithm. We generate random MDPs with $|S|$ ranging from $28 $to $40$ and $|A|$ ranging from $2$ to $15$. Each state is assigned a reward $r(s) \in [-1, 1]$, and episodes always start at state $0$.

\begin{figure}
\centering
\includegraphics[width=\columnwidth]{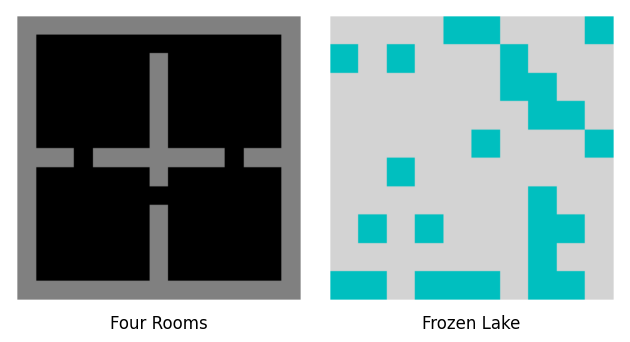}
\caption{Evaluation environments}
\label{fig:envs}
\end{figure}

The results are averaged over 5 seeds of the Random MDP, Frozen Lake and Four Rooms environments and over 10 networks in the case of the Cyber Security environment. In the comparison of MEIR and MM with an MCE IRL-based observer, results are presented for grid sizes of $10\times10$ and $13\times13$ for the Frozen Lake and Four Rooms environments respectively. In the case of an IQ-Learn-based observer and comparison with the DQFN algorithm, results are presented for grid sizes $5\times5$ and $9\times9$ for the Frozen Lake and Four Rooms environments due to the large run times of IQ-Learn and DQFN. 

Tabular Q-learning was used to solve the RL objective (Lines \ref{conv_line} and \ref{conv_line2} in Algorithm \ref{alg:mm_pd_descent} and \ref{alg:mm_binary_search}). The $f$-divergence based anti-rewards were derived using their closed form solutions (Table \ref{table:closed_f_div}), and the Imitation library \cite{gleave2022imitation} implementation of MCE IRL was used. The DQFN and IQ-Learn algorithms were implemented using the Stable-Baselines3\cite{stable-baselines3} library. The experiments were run an a machine with $8$ CPUs and $2$ NVIDIA GeForce GTX 1080 GPUs.

\begin{table*}[t]
\centering
\caption{Different $f$-divergences and their corresponding closed form solutions.}.
\label{table:closed_f_div}
\begin{tabular}{|l|c|c|c|}
\hline
Divergence & \textbf{$f(t)$} & \textbf{$\boldsymbol{\phi}(u)$} & \textbf{$r^-$}\\
\hline
Forward KL & $-\log{t}$ & $1 + \log{u}$ & $\frac{\rho^-}{\rho^*}$ \\[2mm]
Backward KL & $t\log{t}$ & $-e^{-{u+1}}$ &  $-(1+\log{\frac{\rho^*}{\rho^-}})$ \\[2mm]
Jensen Shannon & $-(t+1)\log(\frac{1+t}{2}) + t\log{t}$ & $\log(2-e^{-u})$ &  $\log\frac{1}{2}(1+{\frac{\rho^-}{\rho^*}})$ \\[2mm]
Pearson $\chi^2$ & $(t-1)^2$ & $u-\frac{u^2}{4}$ &  $2(1-\frac{\rho^*}{\rho^-})$ \\[2mm]
Squared Hellinger & $(\sqrt{t}-1)^2$ & $\frac{u}{1+u}$ &  $\sqrt{\frac{\rho^-}{\rho^*}}-1$ \\[2mm]
Total Variation & $\frac{1}{2}|t-1|$ & $u$ &  $1-\text{sign}(1-\frac{\rho^*}{\rho^-})$ \\[2mm]
\hline
\end{tabular}
\end{table*}

\begin{table*}[t]
\centering
\caption{Comparison with the MEIR algorithm}
\label{table:comp_meir_app}
\begin{tabular}{|llcc|}
\hline
\textbf{Env} & \textbf{Algorithm} & \textbf{Avg. Pearson} & \textbf{Avg. EPIC} \\
\hline
\multirow{4}{*}{Random MDP} & MEIR & 0.98 & 0.04 \\
 & MM ($\tau$-based) (KL) & 0.58 & 0.45 \\
 & MM ($\rho$-based) (W-1) & 0.77 & 0.33 \\
 & MM ($\rho$-based) (JS) & \textbf{0.43} & \textbf{0.51} \\
\hline
\multirow{4}{*}{Frozen Lake $5\times5$} & MEIR & 0.90 & 0.15 \\
 & MM ($\tau$-based) (KL) & \textbf{0.17} & \textbf{0.64} \\
 & MM ($\rho$-based) (W-1) & 0.46 & 0.52 \\
 & MM ($\rho$-based) (JS) & 0.25 & 0.60 \\
\hline
\multirow{4}{*}{Frozen Lake $10\times10$} & MEIR & 0.56 & 0.46 \\
 & MM ($\tau$-based) (KL) & \textbf{0.14} & \textbf{0.65} \\
 & MM ($\rho$-based) (W-1) & 0.26 & 0.61 \\
 & MM ($\rho$-based) (JS) & 0.18 & 0.64 \\
\hline
\multirow{4}{*}{Four Rooms $9\times9$} & MEIR & 0.77 & 0.29 \\
 & MM ($\tau$-based) (KL) & \textbf{0.05} & \textbf{0.69} \\
 & MM ($\rho$-based) (W-1) & 0.16 & 0.64 \\
 & MM ($\rho$-based) (JS) & 0.18 & 0.64 \\
 \hline
\multirow{4}{*}{Four Rooms $13\times13$} & MEIR & 0.41 & 0.52 \\
 & MM ($\tau$-based) (KL) & \textbf{0.05} & \textbf{0.69} \\
 & MM ($\rho$-based) (W-1) & 0.11 & 0.67 \\
 & MM ($\rho$-based) (JS) & 0.11 & 0.67 \\
\hline
\multirow{3}{*}{Cyber Security} & MEIR & \textbf{0.09} & \textbf{0.70} \\
 & MM ($\tau$-based) (KL) & 0.22 & 0.67 \\
 & MM ($\rho$-based) (W-1) & 0.46 & 0.51 \\
\hline
\end{tabular}
\end{table*}

\begin{table*}[t]
\centering
\caption{Comparing anti-reward functions based on occupancy measures}
\label{table:comparing_occ}
\begin{tabular}{|llcc|}
\hline
\textbf{Env} & \textbf{Divergence} & \textbf{Avg. Pearson} & \textbf{Avg. EPIC} \\
\hline
\multirow{8}{*}{Random MDP} & Wasserstein-1 (Linear) & 0.77 & 0.33 \\
 & Wasserstein-1 (NN) & 0.75 & 0.34 \\
 & Forward KL & 0.64 & 0.40 \\
 & Backward KL & 0.50 & 0.48 \\
 & Hellinger & 0.58 & 0.44 \\
 & Pearson $\chi^2$ & 0.70 & 0.37 \\
 & Total Variation & 0.66 & 0.40 \\
 & Jensen Shannon & \textbf{0.43} & \textbf{0.51} \\
\hline
\multirow{8}{*}{Four Rooms $9\times9$} & Wasserstein-1 (Linear) & 0.17 & 0.64 \\
 & Wasserstein-1 (NN) & 0.26 & 0.60 \\
 & Forward KL & 0.26 & 0.60 \\
 & Backward KL & 0.19 & 0.64 \\
 & Hellinger & 0.23 & 0.62 \\
 & Pearson $\chi^2$ & 0.18 & 0.63 \\
 & Total Variation & \textbf{0.07} & \textbf{0.68} \\
 & Jensen Shannon & 0.18 & 0.64 \\
\hline
\multirow{8}{*}{Frozen Lake $5\times5$} & Wasserstein-1 (Linear) & 0.46 & 0.52 \\
 & Wasserstein-1 (NN) & 0.46 & 0.52 \\
 & Forward KL & 0.30 & 0.58 \\
 & Backward KL & 0.23 & 0.61 \\
 & Hellinger & 0.26 & 0.60 \\
 & Pearson $\chi^2$ & \textbf{0.17} & \textbf{0.63} \\
 & Total Variation & 0.20 & 0.62 \\
 & Jensen Shannon & 0.25 & 0.60 \\ 
 \hline
\end{tabular}
\end{table*}

\section{Related Work}
\label{app:related_work}

There exist two primary approaches towards reward function privacy: Differential Privacy (DP) and Deception.

\subsection{Differential Privacy based methods}

Differential Privacy \cite{10.1007/11681878_14} is a mathematically rigorous framework that ensures that the output of a "mechanism" when applied to a dataset is robust to changes in individual data points of the dataset. This property makes it difficult for observers to reconstruct individual data points from the output, thereby preserving privacy. The application of differential privacy in RL has been studied in prior works \cite{Prakash_Husain_Paruchuri_Gujar_2022, vietri2020private, Zou, dqfn}. The approach involves introducing Gaussian noise to Bellman updates \cite{Prakash_Husain_Paruchuri_Gujar_2022, dqfn}, gradients of TD-errors and surrogate advantages \cite{Prakash_Husain_Paruchuri_Gujar_2022} to ensure differential privacy. By doing so, the Value functions, and post processed functions such as policies/occupancy measures become resistant to observers attempting to recover the exact reward function by querying these functions. In the context of reward function privacy, DP based security guarantees are ill-suited as: (a) there are infinitely many reward functions that can represent the user's preferences (e.g., shaped rewards), so preventing the observer from reconstructing the exact reward function does not fully address the issue and (b) the $l_{\infty}$ and $l_{p}$ norms are not good metrics to use when comparing reward functions \cite{epic} as two reward functions in the same $l_{\infty}$ neighbourhood may possess several other properties that pose a privacy leak such as ordering of polices, hence more nuanced metrics such as those presented in Section \ref{sec:assesing} are required. Apart from this, it has been demonstrated in \cite{Prakash_Husain_Paruchuri_Gujar_2022} that policies learnt using DP-based RL algorithms possess unfavourable properties from the standpoint of usability and privacy: (a) the expected reward does not monotonically increase with the increase in the privacy budget, making it hard for the user to control the privacy/reward tradeoff, (b) the quality of the reward function recovered by an observer is independent of the privacy budget, thus "rendering all strategies ineffective at being a truly meaningful private strategy"-\cite{Prakash_Husain_Paruchuri_Gujar_2022}. We show empirically that the MM algorithm does not suffer from these drawbacks. Also, methods such as \cite{Prakash_Husain_Paruchuri_Gujar_2022, vietri2020private, Zou, dqfn} are specific to the learning algorithm, i.e, value-based or policy gradient-based, whereas, the MM algorithm can be implemented with any learning algorithm. 


\subsection{Deception based methods}

Deception can be categorized as the act of creating or maintaining false beliefs in the minds of others \cite{10.1093/acprof:oso/9780199577415.001.0001}. Military strategists Bell and Whaley \cite{article, bowyer1982cheating, doi:10.1080/01402398208437106} argue that the "distortion of perceived reality" can be achieved through two methods: \textit{simulation} (showing the false) and \textit{dissimulation} (hiding the truth). Deceptive Reinforcement Learning introduced in \cite{deceptiverl} is a paradigm within RL that uses the concept of deception to learn policies that preserve the privacy of the reward function. Previous research has explored model-based deception-based path planning \cite{10.5555/3061053.3061065, kulkarni2018resource, Masters2017DeceptiveP, topcu}, whereas the algorithm presented in this research is model-free. \cite{deceptiverl} present two methods based on \textit{simulation} and \textit{dissimulation} for model-free path planning. Our work differs from these path-planning algorithms as follows:

(a) Path planning algorithms consider environments with terminal goal states and aim to maximize the uncertainty of the goal's location within a single trajectory. We assume that the observer is observing the agent over multiple trajectories, in such a case, every demonstration would end at a goal state and identifying the location of the goal states becomes trivial.

(b) The difference in the underlying assumptions and constraints between the work presented in \cite{deceptiverl} and our work (e.g., agents behaviour governed by the reward functions within a predefined set) limit the direct applicability of the findings from \cite{deceptiverl} to the context of IRL-based observers. Our research explores and addresses the unique challenges IRL algorithms pose when presented with multiple demonstrations from a private policy.

(c) The algorithms presented in \cite{deceptiverl}, does not incorporate constraints on the expected reward. Hence, similar to DP-based algorithms, one must carefully tune the hyperparameters to balance the privacy-reward trade-off.

The MEIR algorithm, presented in \cite{maxent}, is another existing approach towards policy optimization for preserving privacy against unknown observers. They work under the same assumption i) observer is unknown and (ii) observer has access to the agent’s policy and the goal is to minimise the information provided to the observer to enhance security. MEIR focuses on maximizing the entropy of the agent's policy for privacy preservation. We demonstrate that the MEIR algorithm can be reformulated as a dissimulation-based Deceptive RL approach that incorporates reward constraints. This formulation is applicable in the IRL-based observer setting, as it aims to preserve privacy across multiple demonstrations. Section \ref{sec:meir} of this paper proves that the policies learnt using the MEIR algorithm exhibit a significant privacy leak of the reward function when confronted with observers employing the MCE IRL algorithm.

\end{document}